\documentclass{article} 
\usepackage{iclr2026_conference,times}

\usepackage[utf8]{inputenc} 
\usepackage[T1]{fontenc}    
\usepackage[dvipsnames]{xcolor}         
\definecolor{Grey}{rgb}{0.5,0.5,0.5}
\definecolor{InterBlue}{RGB}{1, 30, 160} 

\usepackage[colorlinks=True, citecolor=InterBlue, linkcolor=InterBlue, urlcolor=InterBlue]{hyperref}
\usepackage{url}            
\usepackage{booktabs}       
\usepackage{amsfonts}       
\usepackage{nicefrac}       
\usepackage{microtype}      
\usepackage{amsthm}        

\usepackage{amsmath,amsfonts,bm}

\newcommand{\attn}{\mathrm{attn}}

\def\eqref#1{equation~\ref{#1}}

\def\1{\bm{1}}

\DeclareMathAlphabet{\mathsfit}{\encodingdefault}{\sfdefault}{m}{sl}
\SetMathAlphabet{\mathsfit}{bold}{\encodingdefault}{\sfdefault}{bx}{n}

\newcommand{\softmax}{\mathrm{softmax}}

\usepackage{booktabs,tabularx}
\usepackage{wrapfig}
\newtheorem{definition}{Definition}

\newtheorem{theorem}{Theorem}
\newtheorem{lemma}{Lemma}
\newtheorem{proposition}{Proposition}
\newtheorem{corollary}{Corollary}

\newtheorem{remark}{Remark}

\definecolor{skyblueA}{HTML}{EAF6FF}   
\definecolor{skyblueB}{HTML}{CFEAFF}   
\definecolor{skyblueC}{HTML}{4FA3E3}   
\definecolor{skyblueD}{HTML}{1D70B8}   
\definecolor{skyblueE}{HTML}{0F3D63}   
\usepackage[most]{tcolorbox}
\usepackage{xcolor}
\usepackage[scaled]{helvet}

\definecolor{takeawayborder}{HTML}{111111} 
\definecolor{takeawaytitlebg}{HTML}{7A0F14} 
\definecolor{takeawaybg}{HTML}{FFFFFF}      

\newtcolorbox{takeawaybox}[1][]{
  enhanced,
  breakable,
  colback=takeawaybg,
  colframe=takeawayborder,
  boxrule=1.1pt,
  arc=2.2mm,
  left=2mm,
  right=2mm,
  top=1.3mm,
  bottom=1.3mm,
  title=Takeaway,
  coltitle=white,
  fonttitle=\bfseries\fontfamily{phv}\selectfont,
  fontupper=\normalfont,
  attach boxed title to top left={xshift=2mm,yshift=-2mm},
  boxed title style={
    colback=InterBlue,
    colframe=InterBlue,
    arc=1.3mm,
    boxrule=0pt
  },
  before skip=8pt,
  after skip=10pt,
  #1
}

\definecolor{Aone}{HTML}{1B9E77}   
\definecolor{Atwo}{HTML}{D95F02}   
\definecolor{Athree}{HTML}{7570B3} 
\definecolor{Afour}{HTML}{E7298A}  
\definecolor{Afive}{HTML}{66A61E}  
\definecolor{Asix}{HTML}{E6AB02}   
\definecolor{Aseven}{HTML}{A6761D} 
\definecolor{Aeight}{HTML}{666666} 

\title{Value-Gradient Hypothesis of RL for LLMs}

\author{
  \centerline{
    \begin{tabular}{cccc}
      \textbf{Arip Asadulaev}\thanks{Contact: \texttt{arip.asadulaev@mbzuai.ac.ae}} & \textbf{Daniil Ognev} & \textbf{Karim Salta} & \textbf{Martin Takac} \\
      \textnormal{MBZUAI} & \textnormal{MBZUAI} & \textnormal{Independent} & \textnormal{MBZUAI}
    \end{tabular}
  }
}

\iclrfinalcopy 

\usepackage[textsize=tiny]{todonotes}

\begin{document}

\maketitle

\begin{abstract}
Reinforcement learning substantially improves pretrained language models, but it remains understudied why critic-free methods such as PPO and GRPO work as well as they do, and when they should provide the largest gains. We develop a value-gradient perspective of critic-free RL for LLM post-training. First, under a differentiable rollout and additive-noise parameterization, we show that the actor update is value-gradient-like in expectation: \emph{the backward pass propagates costates whose conditional expectation equals the value gradient}. Second, for discrete transformer policies, we show that autodifferentiation through attention produces empirical costates that approximate this value signal, with an error controlled by the sampling gap and policy entropy. These results motivate a decomposition of RL impact into value gradient signal and reachable reward headroom, yielding a criterion for when RL should be most effective along a pretraining trajectory.


\end{abstract}

\section{Introduction}
\begin{wrapfigure}{r}{0.49\textwidth} 
    \vspace{-3mm}
    \centering
    \includegraphics[width=\linewidth]{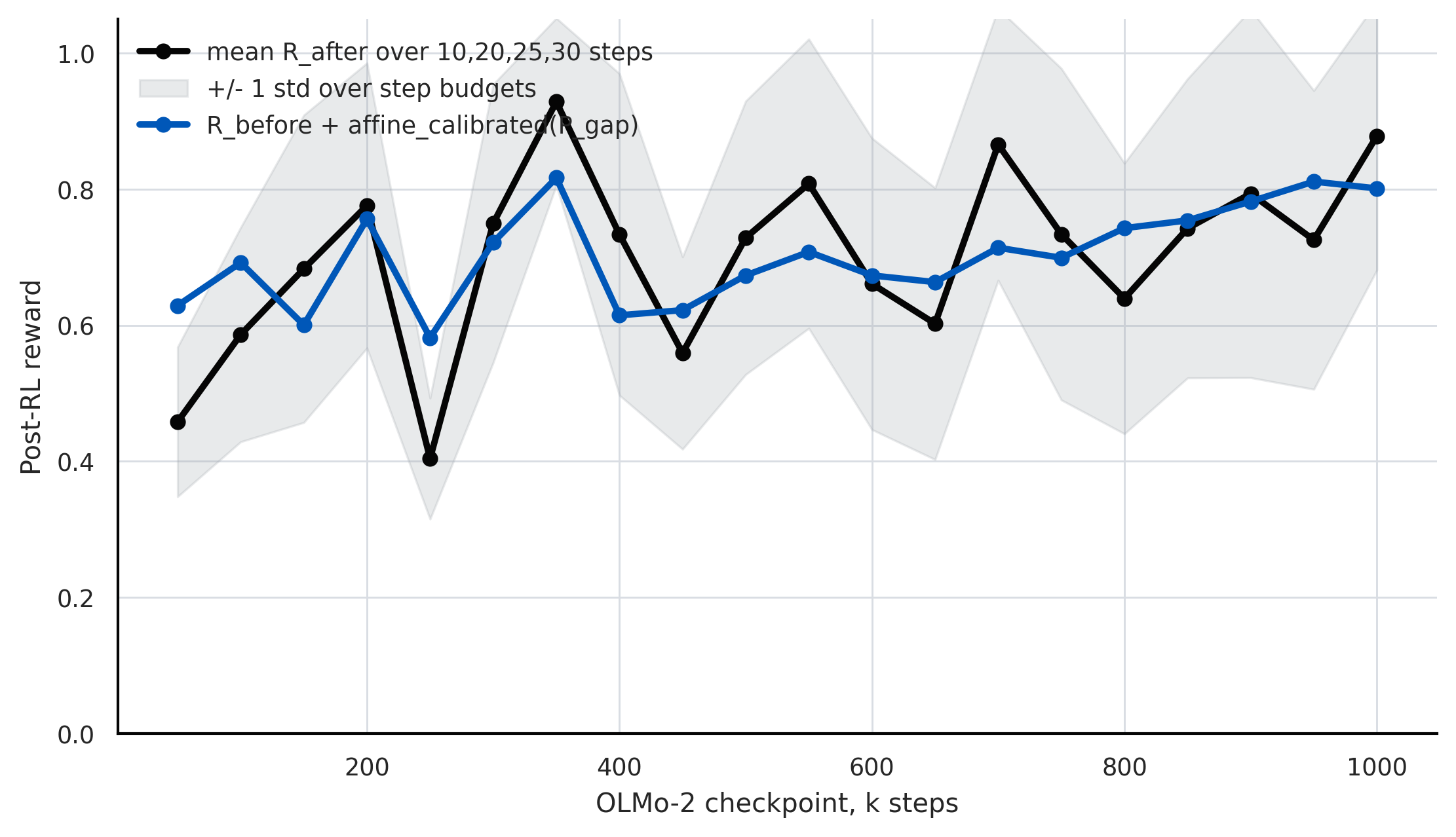}\\
    \vspace{0pt}
    \vspace{-3mm}
    \caption{Real RL gain vs. predicted one using value impact formula (Section \ref{sec:rl-readiness}, Eq. \ref{eq:readiness-summary}).}
    \label{fig:costates_25}
    \vspace{-5mm}
\end{wrapfigure} 
Recently, Large Language Models (LLMs) achieve state-of-the-art reasoning using Group Relative Policy Optimization (GRPO)~\citep{shao2024deepseekmath}, which discards the critic entirely, yet classical Reinforcement Learning (RL) theory predicts that critic-free methods should fail at long-horizon credit assignment. Why don't they? \emph{In this paper we argue that critic-free RL in LLMs is not value-free}. The central claim is that the actor backward pass already carries a value-gradient-like signal. In a differentiable rollout, this signal is exactly the costate propagated by Backpropagation Through Time (BPTT). In a discrete transformer, the same structure survives approximately because attention provides a differentiable pathway for credit transport around the token-sampling bottleneck.

First, in our paper, under a continuous relaxation \citep{fairbank2012value}, we show that the Proximal Policy Optimization (PPO)/GRPO actor update is value-gradient-like in expectation. Second, for discrete transformer policies, we show that the empirical costate computed by autodiff approximates the continuous BPTT signal, with an error controlled by the sampling gap and policy entropy. Third, we use this perspective to derive an RL-impact decomposition into usable value-gradient signal and reachable reward headroom, which predicts when RL should be most effective along pretraining (Figure \ref{fig:costates_25}). This perspective gives a concrete answer to a practical question: \emph{RL should help most at checkpoints that are simultaneously close enough to the value-gradient regime to transmit useful credit and far enough from saturation to retain reward-improving trajectories}. 

Our \textbf{contributions} are: We show that under a differentiable rollout and shift/additive-noise policy, the local GRPO actor update is value-gradient-like in expectation. We show that in transformers with discrete token sampling, the empirical costate computed by autodiff approximates the BPTT costate, with an error controlled by the sampling gap and attention-based credit transport. We derive a predictive RL-impact decomposition into usable value-gradient signal and reachable reward headroom, which empirically can be used for checkpoint selection during pretraining.

\section{Background}
\label{sec:background}
\textbf{Notation}. A prompt/question is denoted by $q\sim P(Q)$. Given $q$, a policy with
parameters $\theta$ generates an autoregressive completion
$o=(o_1,\dots,o_T)$ with token-level factorisation
$\pi_\theta(o\mid q)=\prod_{t=1}^T \pi_\theta(o_t\mid s_t)$, $s_t := (q,o_{<t})$, $a_t := o_t.$
We write $(s_t,a_t)_{t=1}^{T}$ for the token-level trajectory. Let
$r(s_t,a_t)\in\mathbb{R}$ denote a per-token reward. \emph{Outcome-only} RL is
the case where $r(s_t,a_t)=0$ for $t<T$ and $r(s_T,a_T)=r(q,o)$ is a terminal
reward. With a fixed discount $\gamma\in(0,1]$, the discounted return and the
return-to-go are $R := \sum_{t=1}^{T} \gamma^{t-1}\, r(s_t,a_t)$, $R_t := \sum_{j=t}^{T} \gamma^{j-t}\, r(s_j,a_j)$.

For any policy $\pi$, define the time-indexed value function, action-value,
and advantage $V^\pi_t(s):=\mathbb{E}[R_t\mid s_t=s]$, $Q^\pi_t(s,a):=\mathbb{E}[R_t\mid s_t=s,\,a_t=a]$, 
$A^\pi_t(s,a):=Q^\pi_t(s,a)-V^\pi_t(s)$. 
The \emph{value gradient} is the state-gradient of the value function,
$G_t^\pi(s)\;:=\;\frac{\partial V_t^\pi(s)}{\partial s}$.
Unlike the scalar $V_t^\pi$, it is a vector field on state space. Throughout the paper we write gradients with the partial-derivative symbol:
$\frac{\partial F(\theta)}{\partial\theta}$ for a gradient in parameter
space, $\frac{\partial\phi(s)}{\partial s}$ and
$\frac{\partial\phi(s,a)}{\partial a}$ for gradients in state or action
space, respectively.

\subsection{RL for LLMs}
GRPO~\citep{shao2024deepseekmath} is commonly presented as a
PPO~\citep{schulman2017proximal} variant that removes the learned critic/value
model. For each prompt $q$, it samples a group of $n$ completions
$\{o^i\}_{i=1}^{n}$ from an old policy $\pi_{\theta_{\mathrm{old}}}$, where
the index $i$ identifies the rollout (and its return, which may be
terminal-only or per-token). Let $T_i:=|o^i|$ and
$s_{i,t}:=(q,o^i_{<t})$. In the outcome-only case each completion receives a
scalar reward $r_i=r(q,o^i)$. GRPO forms a group-normalised signal
$\tilde r_i$ and the tokenwise likelihood ratio: for $t=1,\dots,T_i$,
$\tilde r_i := \frac{r_i-\mathrm{mean}(r)}{\mathrm{std}(r)+\xi_{\mathrm{num}}}$, $
\rho_{i,t}(\theta)
:= \frac{\pi_\theta(o^i_t\mid s_{i,t})}{\pi_{\theta_{\mathrm{old}}}(o^i_t\mid s_{i,t})},
$
with $\xi_{\mathrm{num}}>0$ for numerical stability and
$r=(r_1,\dots,r_G)$. GRPO then uses a tokenwise advantage estimate constant
along the completion,
$\widehat A_{i,t}:=\tilde r_i$ for all $t$ (process-supervision variants can
use token- or step-level reward-to-go). The objective is the GRPO clipped
surrogate with a KL penalty to a fixed reference policy $\pi_{\mathrm{ref}}$:
\begin{equation}\label{eq:grpo_objective}
\begin{aligned}
J(\theta) = \mathbb{E}\Bigg[\frac{1}{G}\sum_{i=1}^G \frac{1}{T_i}\sum_{t=1}^{T_i}
& \Bigg(\min\!\Bigl(\rho_{i,t}(\theta)\widehat A_{i,t},\;
  \operatorname{clip}(\rho_{i,t}(\theta),1-\varepsilon,1+\varepsilon)\widehat A_{i,t}\Bigr) -\beta\,{\mathrm{KL}}\!\Bigl(\pi_\theta\,\|\,\pi_{\mathrm{ref}}\Bigr)\Bigg)\Bigg],
\end{aligned}
\end{equation}
where $\varepsilon>0$ is the clipping threshold and $\beta>0$ controls the KL
regularisation. GRPO differs from PPO only in how $\widehat A_{i,t}$ is
constructed: it uses a group-normalised return (computed from multiple
rollouts for the same prompt) and typically holds the resulting scalar
constant along the trajectory.

\subsection{Gradient Estimators}
\label{sec:gradients}
Let $x$ be a random variable, $c$ a differentiable scalar cost, and
$F(\theta) := \mathbb{E}_x[c(x)]$ the objective. There are two canonical ways
$\theta$ can enter this expectation \citep{schulman2015gradient}. For an \textbf{Score-function (SF) estimator},
if $x \sim p(\,\cdot\,;\theta)$, then by the log-derivative trick
\begin{equation}
    \frac{\partial}{\partial\theta}\,\mathbb{E}_{x\sim p(\cdot;\theta)}\!\bigl[c(x)\bigr]
    \;=\; \mathbb{E}_{x}\!\left[c(x)\,\frac{\partial}{\partial\theta}\log p(x;\theta)\right].
    \label{eq:sf}
\end{equation}
This identity is valid whenever $p(x;\theta)$ is differentiable in $\theta$.
Crucially, $c$ need not be differentiable, or even continuous, in $x$.
That is exactly why SF is the natural estimator for classical RL:
discrete actions, non-differentiable rewards, and unknown dynamics are all
admissible. The cost is variance: the estimator uses only the \emph{scalar}
$c(x)$, not its slope, and therefore \emph{ignores all local geometry of $c$.}

\textbf{Pathwise-derivative (PD) estimator.}
If instead $x = x(z,\theta)$ is a differentiable function of $\theta$ and an
exogenous noise variable $z \sim p(z)$ whose distribution does \emph{not} depend
on $\theta$ (a \emph{reparameterization}), then differentiation and expectation
commute directly:
\begin{equation}
    \frac{\partial}{\partial\theta}\,\mathbb{E}_{z}\!\bigl[c(x(z,\theta))\bigr]
    \;=\; \mathbb{E}_{z}\!\left[\frac{\partial}{\partial\theta}c(x(z,\theta))\right].
    \label{eq:pd}
\end{equation}
PD exploits $\frac{\partial c}{\partial x}$ directly and is typically lower variance
than SF when both apply \citep{rezende2014stochastic}. The price is a stronger
regularity requirement: $c\circ x(\cdot,\theta)$ must be (almost everywhere)
differentiable, and the sampling must admit a reparameterization.

\subsection{Costates and Value-gradients}
\label{sec:bptt}

We now lift the single-variable policy gradient setup of \S\ref{sec:gradients} to trajectories and ask what object the backward pass in RL settings computes.

\begin{definition}[Differentiable rollout]
\label{def:reparam}
The functions $\pi_\theta, f_\theta, r$ are differentiable in their arguments.
The noise law $p(\xi)$ does not depend on $\theta$, and
\begin{equation}
    a_t = \pi_\theta(s_t,\xi_t),\qquad s_{t+1} = f_\theta\bigl(s_t, \pi_\theta(s_t,\xi_t)\bigr),
    \qquad \xi_t\sim p(\xi)\text{ i.i.d., independent of }\theta.
    \label{eq:diffrollout}
\end{equation}
\end{definition}
Let $D$ denote the total derivative with respect to the state, accounting for both the direct state dependence and the indirect dependence through the policy's action. $D f_\theta(s_t,a_t) = \frac{\partial f_\theta}{\partial s} + \frac{\partial f_\theta}{\partial a}\frac{\partial \pi_\theta}{\partial s}$ for the vector-valued dynamics, and similarly $D r$ for the reward. Because all randomness is exogenous under Definition~\ref{def:reparam}, the sampled return $R_1(\theta,\xi_{1:T})$ is a deterministic function of the parameters given the noise. This allows differentiation to commute with expectation, resulting in a pathwise identity formula analogous to \eqref{eq:pd}: $\frac{\partial J(\theta)}{\partial \theta} = \mathbb{E}\!\left[\frac{\partial R_1}{\partial \theta}\right]$. The gradient $\frac{\partial R_1}{\partial \theta}$ is computed by differentiating the unrolled computation graph~\eqref{eq:diffrollout}, a process known as \emph{backpropagation through time} (BPTT) \citep{fairbank2012value}. Crucially, BPTT does not propagate the parameter gradient itself. It propagates the state-sensitivity adjoint, which we call the \textbf{costate}:
\begin{equation}\label{eq:lambda}
    \lambda_t \;:=\; \frac{\partial R_t}{\partial s_t}, \qquad \lambda_{T+1}:=0.
\end{equation}
Intuitively, the effect of the current state on future return has two pieces:
the immediate effect on the current reward, plus the future effect pushed
backward through the transition Jacobian. Formally:

\begin{proposition}[Adjoint recursion, \citep{fairbank2012value}]
\label{prop:adjoint}
Under Definition~\ref{def:reparam}, the costates satisfy
\begin{equation}
    \boxed{\;\lambda_t \;=\; D r(s_t, a_t) \;+\; \gamma\,\bigl(D f_\theta(s_t,a_t)\bigr)^{\!\top}\!\lambda_{t+1}\;}
    \label{eq:adjoint}
\end{equation}
for $t = T,\ldots,1$, and the exact pathwise parameter gradient is
\begin{equation}
\begin{split}
\frac{\partial J(\theta)}{\partial\theta}=\mathbb{E}\Bigg[\sum_{t=1}^{T}\gamma^{t-1}\Bigg(&\gamma\left(\frac{\partial f_{\theta}(s_{t},a_{t})}{\partial\theta}\right)^{\top}\lambda_{t+1} \\
+&\left(\frac{\partial\pi_{\theta}(s_{t},\xi_{t})}{\partial\theta}\right)^{\top}\left(\frac{\partial r(s_{t},a_{t})}{\partial a_{t}}+\gamma\left(\frac{\partial f_{\theta}(s_{t},a_{t})}{\partial a_{t}}\right)^{\top}\lambda_{t+1}\right)\Bigg)\Bigg].
\end{split}
\label{eq:bptt-grad}
\end{equation}
\end{proposition}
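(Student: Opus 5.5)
We prove both claims by direct differentiation of the unrolled computation graph, treating the noise sequence $\xi_{1:T}$ as fixed. Under Definition~\ref{def:reparam}, once $\xi_{1:T}$ is fixed the return-to-go $R_t$ is a deterministic function of $s_t$ and $\theta$. It depends on $s_t$ in two ways: through the immediate reward $r(s_t,\pi_\theta(s_t,\xi_t))$, and through the next state $s_{t+1}=f_\theta(s_t,\pi_\theta(s_t,\xi_t))$. Every later quantity depends on $s_t$ only through $s_{t+1}$.

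\textbf{Step 1: the adjoint recursion.} We start from the one-step identity $R_t = r(s_t,a_t) + \gamma R_{t+1}(s_{t+1})$, with $a_t=\pi_\theta(s_t,\xi_t)$ and $R_{T+1}\equiv 0$. We apply the chain rule with respect to $s_t$. The first term gives $\frac{\partial r}{\partial s}+\left(\frac{\partial\pi_\theta}{\partial s}\right)^{\top}\frac{\partial r}{\partial a}$, which is $Dr(s_t,a_t)$. The second term gives $\gamma\left(\frac{\partial s_{t+1}}{\partial s_t}\right)^{\top}\frac{\partial R_{t+1}}{\partial s_{t+1}}$. Here $\frac{\partial s_{t+1}}{\partial s_t}=\frac{\partial f_\theta}{\partial s}+\frac{\partial f_\theta}{\partial a}\frac{\partial \pi_\theta}{\partial s}=Df_\theta(s_t,a_t)$, and $\frac{\partial R_{t+1}}{\partial s_{t+1}}=\lambda_{t+1}$ by \eqref{eq:lambda}. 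This yields \eqref{eq:adjoint}, by backward induction from $t=T$ with base case $\lambda_{T+1}=0$. Well-posedness needs only that $R_{t+1}$ be differentiable in $s_{t+1}$, and this follows inductively because it is a composition of differentiable maps.

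\textbf{Step 2: the parameter gradient.} We first justify $\frac{\partial J}{\partial\theta}=\mathbb{E}\left[\frac{\partial R_1}{\partial\theta}\right]$ by exchanging derivative and expectation, as in \eqref{eq:pd}. This is the one point requiring an extra regularity assumption beyond differentiability: we need an integrable dominating bound on $\frac{\partial R_1}{\partial\theta}$ locally uniformly in $\theta$, which holds for instance if the Jacobians are bounded. I expect this interchange to be the only genuinely delicate step, and I would state the assumption explicitly and invoke dominated convergence.

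Next, for fixed $\xi$, we compute $\frac{\partial R_1}{\partial\theta}$ as a sum over time of the \emph{explicit} parameter dependence at each step, each multiplied by the sensitivity of the return to the quantity $\theta$ perturbs there. The cleanest route is to duplicate $\theta$ into per-step copies $\theta_t$, used in $\pi_{\theta_t}(s_t,\xi_t)$ and $f_{\theta_t}$, and then sum $\frac{\partial R_1}{\partial\theta_t}$ at $\theta_t=\theta$. A perturbation of $\theta_t$ leaves $s_1,\dots,s_t$ unchanged. It changes $R_1$ only through $\gamma^{t-1}R_t$, specifically through $a_t$ and $s_{t+1}$.

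\textbf{Step 3: reading off the two terms.} Perturbing $\theta_t$ inside $f$ at fixed $a_t$ contributes $\gamma\left(\frac{\partial f_\theta}{\partial\theta}\right)^{\top}\lambda_{t+1}$. Perturbing it inside $\pi$ moves $a_t$. That in turn changes the reward via $\frac{\partial r}{\partial a_t}$ and changes $s_{t+1}$ via $\frac{\partial f_\theta}{\partial a_t}$, which is then propagated by $\lambda_{t+1}$. Multiplying by the prefactor $\gamma^{t-1}$, summing over $t$, and taking expectations gives \eqref{eq:bptt-grad}. The remainder is bookkeeping of transposes and discount factors, which I would check against the case $T=1$.
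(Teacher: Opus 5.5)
Your Step~1 is the paper's derivation of the adjoint recursion: the chain rule on $R_t=r(s_t,a_t)+\gamma R_{t+1}$ with $\xi_{t:T}$ held fixed. Your derivation of the parameter gradient is correct and takes a genuinely different route. The paper works in forward mode. It introduces the state sensitivities $\dot s_t:=ds_t/d\theta$, which obey $\dot s_{t+1}=Df_\theta\,\dot s_t+\partial f_\theta/\partial\theta+(\partial f_\theta/\partial a_t)(\partial\pi_\theta/\partial\theta)$. It then writes $dR_1/d\theta=\sum_t\gamma^{t-1}[\dot s_t^{\top}Dr+(\partial\pi_\theta/\partial\theta)^{\top}\partial r/\partial a_t]$ and substitutes $Dr=\lambda_t-\gamma(Df_\theta)^{\top}\lambda_{t+1}$. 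Using the sensitivity recursion, all $\dot s$-dependence collapses into the telescoping sum $\sum_t\gamma^{t-1}[\dot s_t^{\top}\lambda_t-\gamma\,\dot s_{t+1}^{\top}\lambda_{t+1}]=\dot s_1^{\top}\lambda_1-\gamma^{T}\dot s_{T+1}^{\top}\lambda_{T+1}$, which vanishes because $\dot s_1=0$ and $\lambda_{T+1}=0$. This is the classical Lagrangian elimination of forward sensitivities by the adjoint. Your per-step copies $\theta_t$ never form $\dot s_t$. The chain rule for the diagonal map $\theta\mapsto(\theta,\dots,\theta)$ gives $dR_1/d\theta=\sum_t\partial R_1/\partial\theta_t$. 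Each summand is $\gamma^{t-1}$ times the explicit $\theta$-dependence at step $t$, contracted against $\partial r/\partial a_t$ and against $\lambda_{t+1}$ via $\partial s_{t+1}/\partial\theta_t=\partial f_\theta/\partial\theta+(\partial f_\theta/\partial a_t)(\partial\pi_\theta/\partial\theta)$. This produces the two brackets of \eqref{eq:bptt-grad} at once, and the $\lambda_{t+1}$ that appears, taken with the later copies fixed at $\theta$, is exactly the defined costate. Your route is shorter and mirrors what reverse-mode autodiff actually accumulates, which fits the paper's later reading of the backward pass. The paper's route makes the boundary conditions visible: the condition $\dot s_1=0$ is precisely where the assumption that the prompt state $s_1$ does not depend on $\theta$ enters. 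In your argument this assumption is hidden in the choice to insert copies only into $\pi_\theta$ and $f_\theta$ (``a perturbation of $\theta_t$ leaves $s_1,\dots,s_t$ unchanged''). You should state it explicitly, since otherwise an extra term $(ds_1/d\theta)^{\top}\lambda_1$ would appear. Your explicit domination hypothesis for exchanging $\partial/\partial\theta$ with $\mathbb{E}$ is more careful than the paper, which only appeals to domination assumptions ``implicit in'' Definition~\ref{def:reparam}.
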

The proof (see Appendix~\ref{app:proofs}) is a direct chain-rule expansion of
$R_t = r(s_t,a_t)+\gamma R_{t+1}$.

\textbf{Costates are value-gradient estimators.}
Conditioning on $s_t=s$, the value function satisfies $V_t^\pi(s) = \mathbb{E}\!\left[r(s_t,a_t)+\gamma V_{t+1}^\pi(s_{t+1}) \,\middle|\, s_t=s\right]$. Comparing this value recursion with the sampled costate recursion in \eqref{eq:adjoint}, we see that the two have the same form. More precisely, differentiating \ref{eq:adjoint} with respect to $s$ and interchanging differentiation and expectation (valid under Definition~\ref{def:reparam}) gives
\begin{equation}
\frac{\partial V_t^\pi(s)}{\partial s}
=
\mathbb{E}\!\left[
D r(s_t,a_t)
+
\gamma \bigl(D f_\theta(s_t,a_t)\bigr)^{\!\top}
G_{t+1}^\pi(s_{t+1})
\,\middle|\, s_t=s
\right].
\label{eq:value-gradient-recursion}
\end{equation}
The value gradient is the corresponding \emph{conditional expectation} over that
future noise. Therefore,
\begin{equation}
\mathbb{E}[\lambda_t \mid s_t=s]
=
\frac{\partial V_t^\pi(s)}{\partial s}
\label{eq:costate-is-vg}
\end{equation}
Thus the quantity propagated backward by BPTT is, at each step, a Monte Carlo
sample of the value gradient. Thus
\emph{critic-free} does not mean \emph{value-free}: the relevant value information is
present as a propagated gradient signal rather than as a separately fitted
scalar critic. 

\begin{takeawaybox}
The key object for this paper is the costate: in a differentiable rollout, the backward
signal propagated by BPTT is a Monte Carlo estimator of the value gradient.
\end{takeawaybox}

\section{Continuous Lens: Why Critic-Free RL Is Value-Gradient-Like}
\label{sec:connect_grpo_vg}

We now show that the GRPO/PPO actor update equals, in expectation, the BPTT pathwise gradient of a continuous-relaxed rollout. The standard GRPO/PPO derivation is written for discrete token actions, $s_t := (q,o_{<t})$, $a_t := o_t$, $s_{t+1}=(s_t,a_t)$,
where the sampling step blocks differentiation through the trajectory. However, \cite{fairbank2012value} integration-by-parts equivalence supplies the missing conceptual link: under an additive-noise parameterization, the expected score-function update can be rewritten as an action-derivative (PD-like) update. To make
the backward signal explicit, we introduce a continuous relaxation of the LLM rollout. At the level of hidden states, logits, or other differentiable
surrogate states, we apply Definition~\ref{def:reparam}  with differentiable $f_\theta$, $\pi_\theta$, and $r$. This turns the rollout
into a reparameterized computation graph, so the pathwise analysis of
\S\ref{sec:bptt} applies. The only ingredient we need is the score-function/pathwise bridge.

\begin{lemma}[Fairbank’s \citep{fairbank2012value} SF $=$ PD under a shift policy]
\label{lem:shift}
Fix a state $s$ and let
$F(\theta)=\mathbb{E}_{a\sim\pi_\theta(\cdot\mid s)}[r(s,a)]$,
with $r$ differentiable in $a$. If
\[
\pi_\theta(a\mid s)=\nu\bigl(a-\bar a_\theta(s)\bigr),
\]
for a differentiable density $\nu$ with vanishing boundary terms, then the
score-function and pathwise forms agree in expectation.
In particular,
\begin{equation}
\underbrace{\mathbb{E}\!\left[r(s,a)\,\frac{\partial}{\partial\theta}\log\pi_\theta(a\mid s)\right]}_{\text{SF form}}
\;=\;
\underbrace{\Bigl(\frac{\partial \bar a_\theta(s)}{\partial\theta}\Bigr)^{\!\top}\,\mathbb{E}\!\left[\frac{\partial r(s,a)}{\partial a}\right]}_{\text{PD form}}.
\label{eq:sf-pd-equiv}
\end{equation}
\end{lemma}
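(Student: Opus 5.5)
The plan is to compute the score of the shift family explicitly and then move the derivative from the density onto the reward by integration by parts in $a$. Write $u := a-\bar a_\theta(s)$. By the chain rule,
\[
\frac{\partial}{\partial\theta}\log\pi_\theta(a\mid s)
= -\Bigl(\frac{\partial \bar a_\theta(s)}{\partial\theta}\Bigr)^{\!\top}\frac{\partial \log\nu(u)}{\partial u}.
\]
The key observation is that, for a pure shift family, the parameter score is a fixed linear map (the Jacobian of the mean action) applied to minus the action score:
\[
\frac{\partial}{\partial\theta}\log\pi_\theta(a\mid s) = -\Bigl(\frac{\partial \bar a_\theta}{\partial\theta}\Bigr)^{\!\top}\frac{\partial}{\partial a}\log\pi_\theta(a\mid s).
\]
Since $\frac{\partial \bar a_\theta}{\partial\theta}$ does not depend on $a$, it can be pulled out of the expectation. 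The SF form then becomes
\[
-\Bigl(\frac{\partial \bar a_\theta}{\partial\theta}\Bigr)^{\!\top}\int r(s,a)\,\frac{\partial \pi_\theta(a\mid s)}{\partial a}\,da,
\]
where we have used $\pi\,\partial_a\log\pi=\partial_a\pi$.

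Next, integrate by parts componentwise in $a$. We have
\[
\int r\,\partial_a\pi\,da = \bigl[r\pi\bigr]_{\partial} - \int \pi\,\partial_a r\,da .
\]
The boundary term vanishes by the hypothesis of vanishing boundary terms, interpreted as $r(s,a)\nu(a-\bar a_\theta)\to0$ as $|a|\to\infty$. The two minus signs cancel, and the right-hand side becomes $\bigl(\frac{\partial \bar a_\theta}{\partial\theta}\bigr)^{\top}\mathbb{E}\bigl[\frac{\partial r}{\partial a}\bigr]$, which is the PD form. As a sanity check, the same expression follows from the reparameterization $a=\bar a_\theta(s)+\xi$ with $\xi\sim\nu$ and \eqref{eq:pd}: differentiating $\mathbb{E}_\xi[r(s,\bar a_\theta+\xi)]$ gives the identical result. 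This consistency also confirms that both sides equal $\partial F/\partial\theta$.

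The main obstacle is making the interchange and boundary steps rigorous rather than formal. I would state explicitly the integrability assumptions that both $r\,\partial_a\nu$ and $\nu\,\partial_a r$ are integrable. For the integration by parts, I would apply Fubini on $\mathbb{R}^d$ and integrate one coordinate at a time, which yields the multivariate identity. If $\nu$ has compact support, I would instead require $\nu$ to vanish on the boundary.
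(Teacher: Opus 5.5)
Your proposal is correct and follows essentially the same route as the paper: both use the chain rule to write the parameter score as $-(\partial \bar a_\theta/\partial\theta)^{\top}$ times the action score, pull that $a$-independent Jacobian out of the integral, and integrate by parts using the vanishing boundary terms. The only differences are cosmetic. You integrate by parts directly in $a$, whereas the paper first shifts to $u=a-\bar a_\theta(s)$. Your explicit integrability and coordinatewise Fubini conditions make precise what the paper leaves as ``vanishing boundary terms.''
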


Lemma~\ref{lem:shift} is the bridge: it guarantees that
the score-function update used in practice matches a pathwise update in
expectation. Applied token-by-token, under a
shift/additive-noise parameterisation
$\pi_\theta(a_t\mid s_t)=\nu\!\bigl(a_t-\bar a_\theta(s_t)\bigr)$, if the GRPO weight $\widehat A_{i,t}$ is treated as a stop-gradient scalar,
each local score-function term admits:
\begin{equation}\label{eq:weighted_sf_to_pd}
\mathbb{E}\!\left[
\widehat A_{i,t}\,r(s_t,a_t)\,
\frac{\partial}{\partial\theta}\log\pi_\theta(a_t\mid s_t)
\right]
\;=\;
\Bigl(\frac{\partial\bar a_\theta(s_t)}{\partial\theta}\Bigr)^{\!\top}\!
\mathbb{E}\!\left[
\frac{\partial\,\bigl(\widehat A_{i,t}\,r(s_t,a_t)\bigr)}{\partial a_t}
\right].
\end{equation}
The right-hand side is PD-shaped: it depends on the slope of the
action-to-reward signal, not on the score function. No SF
information is lost and expectation is unchanged. Then, under the continuous rollout \eqref{def:reparam}, the backward quantity propagated by GRPO is the costate, and, by \S\ref{sec:bptt}, it satisfies
\begin{equation}
\lambda_t := \frac{\partial R_t}{\partial s_t}, \quad \mathbb{E}[\lambda_t \mid s_t=s]
=
\frac{\partial V_t^\pi(s)}{\partial s}.
\label{eq:costate}
\end{equation}
Thus, in the continuous surrogate, the GRPO/PPO update is value-gradient-like
in expectation: even without a fitted critic, the backward pass carries a
Monte Carlo estimator of the state-gradient of future return. The full
derivation is deferred to Appendix~\ref{app:proofs}.
\begin{takeawaybox}
Under a differentiable continuous relaxation, PPO/GRPO is value-gradient-like in expectation,
so critic-free actor updates can still carry a principled credit-assignment signal without a learned critic.
\end{takeawaybox}


\section{Discrete GRPO Sends Approximate BPTT Signals}
\label{sec:discrete_grpo}
Section \S\ref{sec:connect_grpo_vg} established, under a continuous relaxation
and a shift/additive-noise policy, that the local GRPO gradient is a
value-gradient update (Corollary~\ref{cor:grpo_is_vg}). In practice, LLM
rollouts use a \emph{discrete} categorical policy. The key realisation is that when one computes
$\frac{\partial L_{\mathrm{GRPO}}}{\partial\theta}$ by standard
autodifferentiation, one is already backpropagating through the
transformer's internal computation graph across positions. The attention
mechanism creates differentiable pathways between positions; the only place
differentiability breaks is at the token-sampling boundary. The question thus
becomes: \emph{how much of the BPTT costate structure of \S\ref{sec:bptt}
survives despite the discrete sampling gaps?}

For a single trajectory, the log-probability at position $t$ depends on the final-layer hidden state, and the GRPO loss (locally, ignoring clipping) is
\begin{equation}\label{eq:grpo-loss}
    L(\theta) = \sum_{t=1}^{T} \widehat A_t \cdot \ell_t(\theta),
    \qquad \ell_t := \log \pi_\theta(o_t\mid s_t) :=  \log  \softmax\!\bigl(W_{\mathrm{head}}\cdot h_t^{(L)}(\theta)\bigr)_{o_t},
\end{equation}
and in a transformer, $h_t^{(L)}$ depends on hidden representations at
\emph{all previous positions} through the attention mechanism,
$h_t^{(l)} = \mathrm{TransformerBlock}^{(l)}\!\bigl(h_{1:t}^{(l-1)}\bigr)$, $l = 1,\ldots,L.$
Consequently, there exist differentiable Jacobians $J_{t\leftarrow t'} := \frac{\partial h_t^{(L)}}{\partial h_{t'}^{(L)}}$, $t'\le t$, computed through the multi-layer attention stack. These are \emph{exactly the transition Jacobians} that play the role of $Df_\theta$ in \S\ref{sec:bptt}.

\begin{definition}[Empirical costate]\label{def:empirical-costate}
The empirical costate at position $t$ is the gradient of the
advantage-weighted loss with respect to the final-layer hidden state:
\begin{equation}\label{eq:empirical-costate}
    \hat\lambda_t := \frac{\partial}{\partial h_t^{(L)}} \sum_{k\ge t} \widehat A_k \cdot \ell_k.
\end{equation}
\end{definition}

By the chain rule through attention, $\hat\lambda_t = \widehat A_t\cdot\frac{\partial\ell_t}{\partial h_t^{(L)}} +\sum_{k>t}\widehat A_k\cdot\frac{\partial\ell_k}{\partial h_k^{(L)}}\cdot J_{k\leftarrow t}$, or equivalently, in recursive form,
\begin{equation}\label{eq:costate-recursion}
    \boxed{\;
      \hat\lambda_t \;=\; \widehat A_t\cdot\frac{\partial\ell_t}{\partial h_t^{(L)}}
      \;+\; J_{t+1\leftarrow t}^{\!\top}\cdot\hat\lambda_{t+1}
    \;},\qquad \hat\lambda_{T+1}:=0.
\end{equation}

This is \textbf{structurally identical} to the BPTT costate recursion of
Proposition~\ref{prop:adjoint}, $\lambda_t \;=\; D r(s_t,a_t) \;+\; \gamma\,\bigl(Df_\theta(s_t,a_t)\bigr)^{\!\top}\lambda_{t+1}$. The correspondence is
\begin{align}
    D r &\;\longleftrightarrow\; \widehat A_t\cdot\frac{\partial\ell_t}{\partial h_t^{(L)}}
        && \text{(immediate credit signal)},\label{eq:corr1}\\
    (Df_\theta)^{\!\top} &\;\longleftrightarrow\; J_{t+1\leftarrow t}^{\!\top}
        && \text{(transition Jacobian through attention)},\label{eq:corr2}\\
    \lambda_{t+1} &\;\longleftrightarrow\; \hat\lambda_{t+1}
        && \text{(propagated future signal)}.\label{eq:corr3}
\end{align}

\subsection{What is missing vs.\ exact BPTT}

In the continuous relaxation of \S\ref{sec:bptt}, the transition
$s_{t+1}=f_\theta(s_t,a_t)$ is fully differentiable, so $Df_\theta$ captures
\emph{everything}, including how a perturbation of $s_t$ would change the
continuous action $a_t$, which would change $s_{t+1}$. In discrete GRPO, there is a \textbf{non-differentiable gap} at each sampling
step:
\begin{equation}\label{eq:chain}
    h_t^{(L)}
    \;\xrightarrow{\;\text{diff}\;}\; z_t
    \;\xrightarrow{\;\text{sample}\;}\; o_t
    \;\xrightarrow{\;\text{embed}\;}\; e_t
    \;\xrightarrow{\;\text{diff}\;}\; h_{t+1}^{(L)}.
\end{equation}
The Jacobian $J_{t+1\leftarrow t}^{\attn}$ that autodiff computes goes through
the attention pathway: how $h_t^{(L)}$ influences $h_{t+1}^{(L)}$ through
attention at position $t+1$, treating the input token $o_t$ as fixed. It does
\emph{not} capture the path
$h_t^{(L)} \to z_t \to o_t \to e_t \to h_{t+1}^{(L)}$ through the sampled
token.

The exact BPTT Jacobian therefore decomposes as
\begin{equation}\label{eq:jacobian-decomp}
    Df_\theta^{\mathrm{exact}}
    \;=\;
    \underbrace{J_{t+1\leftarrow t}^{\attn}}_{\text{what GRPO computes}}
    \;+\;
    \underbrace{\frac{\partial h_{t+1}^{(L)}}{\partial e_t}
      \cdot \frac{\partial e_t}{\partial o_t}
      \cdot \frac{\partial o_t}{\partial z_t}
      \cdot \frac{\partial z_t}{\partial h_t^{(L)}}}_{\text{missing: through the sampling step}}.
\end{equation}
The second term is non-differentiable in the discrete case. \textbf{This is
the only gap.}

\begin{proposition}[Sampling-gap bound]\label{prop:sampling-gap}
The missing Jacobian term is small when the policy is near-deterministic.
Specifically, if the policy entropy at position $t$ is $H_t$, then the
effective Jacobian through the sampling step (in a straight-through or
Gumbel-Softmax sense) has operator norm bounded by
\begin{equation}\label{eq:gap-bound}
    \bigl\| Df_\theta^{\mathrm{exact}} - J_{t+1\leftarrow t}^{\attn}\bigr\|
    \;\leq\; C\cdot\sqrt{\frac{H_t}{\log|V|}},
\end{equation}
where $|V|$ is the vocabulary size and $C$ depends on embedding norms.
\end{proposition}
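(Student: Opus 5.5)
The plan is to make the ``missing'' term in \eqref{eq:jacobian-decomp} concrete by replacing the non-differentiable map $z_t\mapsto o_t$ with its standard smooth surrogate. In the straight-through or Gumbel-Softmax sense, the one-hot token is replaced by the probability vector $p_t=\softmax(z_t)$. The embedding then becomes $e_t=E^\top p_t$, where $E\in\mathbb{R}^{|V|\times d}$ is the embedding matrix. The missing term factors as
\[
\frac{\partial h_{t+1}^{(L)}}{\partial e_t}\,E^\top\,\bigl(\mathrm{diag}(p_t)-p_tp_t^\top\bigr)\,W_{\mathrm{head}},
\]
using $z_t=W_{\mathrm{head}}h_t^{(L)}$. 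By submultiplicativity its operator norm is at most
\[
\Bigl\|\tfrac{\partial h_{t+1}^{(L)}}{\partial e_t}\Bigr\|\,\|E\|\,\|W_{\mathrm{head}}\|\,\bigl\|\mathrm{diag}(p_t)-p_tp_t^\top\bigr\|.
\]
I will absorb the first three factors into $C$. This is where the statement's phrase ``$C$ depends on embedding norms'' enters; I would state explicitly that the downstream block Jacobian is also assumed bounded. Everything then reduces to a purely probabilistic bound on the softmax Jacobian $\Sigma_t:=\mathrm{diag}(p_t)-p_tp_t^\top$ in terms of the entropy $H_t=H(p_t)$.

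For that bound, let $p_{\max}$ be the largest coordinate of $p_t$ and let $\delta:=1-p_{\max}$ be the mass off the mode. Since $\Sigma_t$ is positive semidefinite, $\|\Sigma_t\|\le\mathrm{tr}\,\Sigma_t=1-\sum_i p_i^2$. The right-hand side is at most $1-p_{\max}^2\le 2\delta$. The next step is to relate $\delta$ to entropy, and the correct direction is a lower bound on entropy in terms of $\delta$. Using $-\log p_i\ge -\log p_{\max}$ gives $H_t\ge -\log p_{\max}\ge \delta$, so $\|\Sigma_t\|\le 2\delta\le 2H_t$. Alternatively, Fano-type reasoning gives $H_t\ge h_b(\delta)\ge \delta\log(1/\delta)$.

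The remaining step is converting the linear-in-$H_t$ bound into the claimed form $\sqrt{H_t/\log|V|}$, and I expect this to be the main obstacle. In the regime of interest, $H_t\le\log|V|$, so $H_t/\log|V|\le1$. Hence
\[
H_t=\sqrt{H_t}\sqrt{H_t}\le\sqrt{\log|V|}\cdot\sqrt{H_t}=\log|V|\sqrt{\tfrac{H_t}{\log|V|}},
\]
which yields $\|\Sigma_t\|\le 2\log|V|\sqrt{H_t/\log|V|}$. This bound pays a factor of $\log|V|$, which must be absorbed into $C$. To avoid this, I would try a sharper route through the identity $\|\Sigma_t\|\le 1$ combined with Pinsker's inequality. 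The distance from $p_t$ to the nearest one-hot vector is controlled by $\delta$. For near-deterministic distributions, $\delta\le H_t/\log(1/\delta)$, which gives a bound of order $\sqrt{H_t}$ uniformly once $\delta\le e^{-1}$. The normalisation by $\log|V|$ then comes from $\min(1,\cdot)$ together with $H_t\le\log|V|$.

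I would present the final statement honestly, with $C$ collecting $\|E\|$, $\|W_{\mathrm{head}}\|$, the downstream Jacobian bound, and the absolute constant, possibly including a $\sqrt{\log|V|}$ factor. The key qualitative point I would emphasise is that $\Sigma_t\to0$ as $H_t\to0$. The specific square-root rate is the step I would need to check most carefully against whichever inequality chain is tightest.
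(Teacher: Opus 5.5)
Your argument is correct. Up to the entropy step it is the paper's argument. You use the same factorisation of the missing term through the relaxed sampling Jacobian, and the same positive-semidefinite trace bound $\|\mathrm{diag}(p_t)-p_tp_t^\top\|\le 1-\|p_t\|_2^2\le 1-p_{\max}^2\le 2(1-p_{\max})$. You also absorb the embedding, head and downstream-block norms into $C$ in the same way. The paper additionally carries a $1/\tau$ temperature factor, whereas you work at $\tau=1$.

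The genuine difference is how $1-p_{\max}$ is tied to entropy. The paper argues non-constructively: for fixed $|V|$, the ratio $(1-p_{\max})/\sqrt{H_t/\log|V|}$ extends continuously to the vertices of the simplex, so compactness yields some finite $C_V$. You instead use the elementary chain $1-p_{\max}\le-\log p_{\max}\le H_t\le\log|V|$, which gives an explicit constant.

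Your worry about paying a $|V|$-dependent factor is unfounded. The paper's $C_V$ also depends on $|V|$ and is absorbed into $C$ in exactly the same way. Such dependence is in fact unavoidable: $p_t=(\tfrac12,\tfrac12,0,\dots,0)$ has $\|\mathrm{diag}(p_t)-p_tp_t^\top\|=\tfrac12$ and $H_t=\log 2$, so its ratio to $\sqrt{H_t/\log|V|}$ grows like $\sqrt{\log|V|}$.

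Your ``sharper route'' needs neither Pinsker nor Fano. The same trace bound gives $\|\mathrm{diag}(p_t)-p_tp_t^\top\|\le 1$, hence $\|\mathrm{diag}(p_t)-p_tp_t^\top\|\le\min(1,2H_t)\le\sqrt{2H_t}=\sqrt{2\log|V|}\,\sqrt{H_t/\log|V|}$. By the example above, this is order-optimal in $|V|$.

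So your route buys an explicit, essentially tight constant. It also gives, in effect, a quantitative proof of the continuity-at-the-vertices claim that the paper only asserts. Keeping $\tau=1$ has a further benefit: $H_t$ is then the entropy of exactly the distribution whose Jacobian is bounded, a point the paper's tempered $p_t$ leaves implicit. The paper's compactness route is shorter, but it says nothing about how $C$ scales with the vocabulary size.
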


\begin{remark}[Why this bound is natural]
When the policy is near-deterministic ($H_t\approx 0$), the softmax is peaked
at one token. A small perturbation of the logits does not change which token
is sampled (the argmax is stable), so $\partial o_t/\partial z_t\approx 0$ in
a distributional sense. The straight-through estimator makes this precise:
the expected Jacobian through sampling scales with the ``softness'' of the
distribution.
\end{remark}

\begin{remark}[Why this matters for LLMs]
After pretraining, LLM policies have relatively low entropy on most tokens
(especially for common syntactic tokens, function words, code syntax). The
KL constraint in GRPO keeps the policy close to this low-entropy reference,
so on most of the trajectory the missing signal is small.
\end{remark}
\begin{wrapfigure}{r}{0.49\textwidth} 
    \vspace{-5mm}
    \centering
    \includegraphics[width=\linewidth]{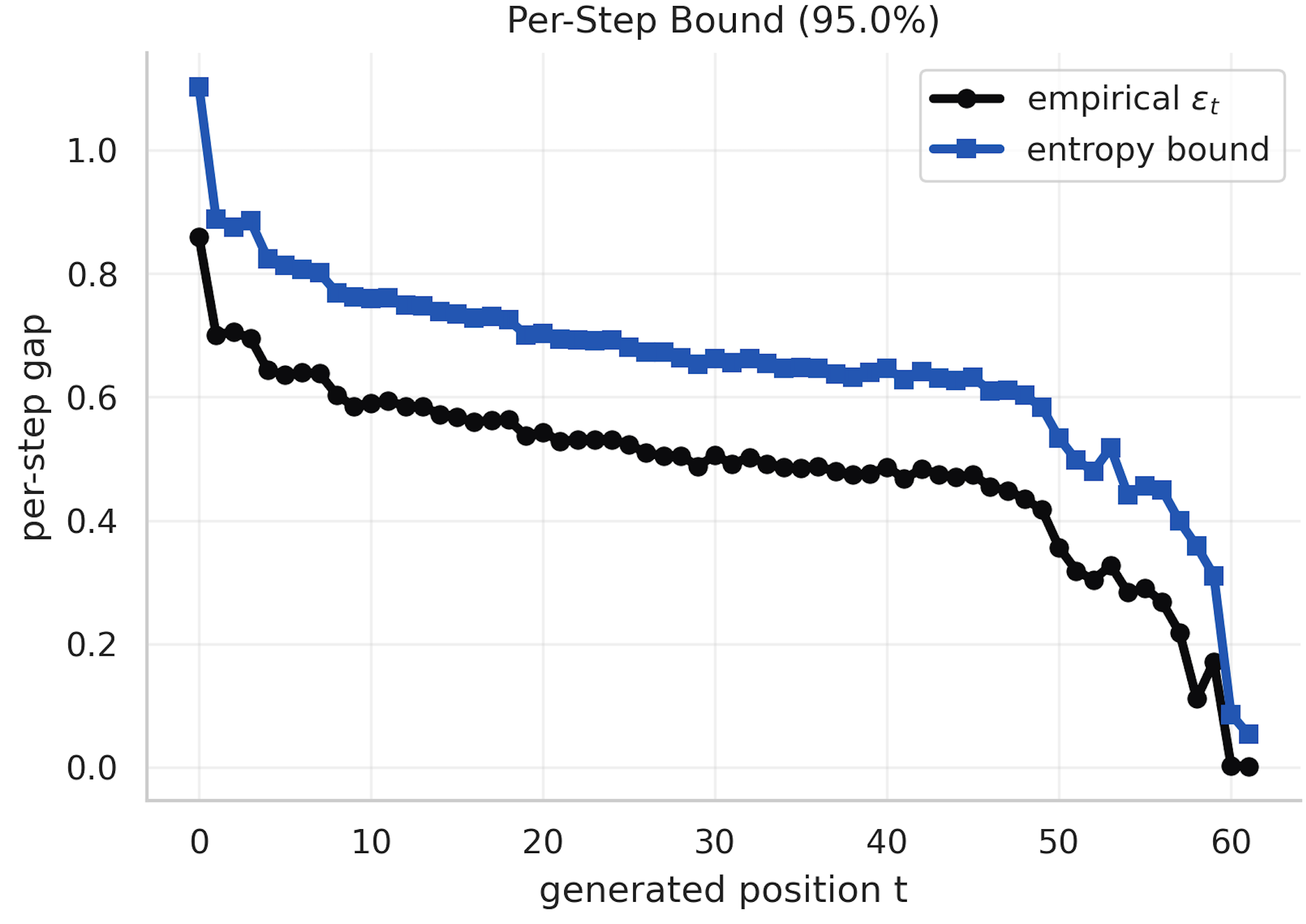}
    \caption{Bound inequality plot. Real value gradient gap vs proposed bound (Section \ref{sec:discrete_grpo}, Prop. \ref{prop:sampling-gap}).}
    \label{fig:bound}
    \vspace{-5mm}
\end{wrapfigure}
\textbf{Attention is doing the heavy lifting}. In an RNN, $h_{t+1}$ depends on $h_t$ only through the recurrence
$h_{t+1}=f(h_t, e_{o_t})$. The \emph{only} pathway is through the discrete
token, so the sampling gap blocks \emph{all} temporal credit flow. In a transformer, $h_{t+1}^{(L)}$ depends on $h_t^{(L)}$ through
\textbf{attention}, a direct, differentiable, content-based pathway that
does not go through the sampling step at position $t$. The attention scores
\begin{equation}
    \alpha_{t+1,\,t'} \;\propto\; \exp\!\bigl(q_{t+1}\cdot k_{t'}\bigr)
\end{equation}
create soft, differentiable pointers to all previous hidden states. This
gives transformers a \emph{bypass around the discrete sampling bottleneck}:
credit can flow from future positions to past hidden states through attention
without differentiating through token sampling.

\begin{proposition}[Attention-pathway rank and magnitude]\label{prop:attn-rank}
For a transformer with $L$ layers and causal attention with
$n_{\mathrm{heads}}$ heads of dimension $d_{\mathrm{head}}$, the
attention-pathway Jacobian $J_{t+1\leftarrow t'}^{\attn}$ has rank at most
$L\cdot d_{\mathrm{head}}\cdot n_{\mathrm{heads}}$ per layer of attention,
and its magnitude is controlled by the attention weights $\alpha_{t,t'}$.
When attention is broadly distributed over context, the Jacobian carries
richer temporal credit information, partially compensating for the missing
sampling-path Jacobian.
\end{proposition}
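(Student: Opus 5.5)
We prove Proposition~\ref{prop:attn-rank} by writing the attention-pathway Jacobian as a chain-rule sum over paths through the layer stack. Both claims then follow from looking at one factor of that sum. Hold the tokens $o_{1:t}$ fixed, i.e.\ treat the embeddings $e_{1:t}$ as constants, as autodiff does in \eqref{eq:chain}. Then $h^{(L)}_{t+1}$ depends on $h^{(L)}_{t'}$ only through the residual-stream states $h^{(l)}_{t'}$ at position $t'$. We interpret the Jacobian with respect to $h^{(L)}_{t'}$ as the sum over layers of the cross-position sensitivities, as is implicit in the definition of $J_{k\leftarrow t}$.

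The key step is to isolate where any cross-position dependence can enter. Inside block $l$, the only operation that mixes positions is multi-head attention. The MLP, the layer norm and the residual connection all act positionwise. So for $t'\neq t+1$,
\begin{equation*}
\frac{\partial h^{(l)}_{t+1}}{\partial h^{(l-1)}_{t'}}
= M^{(l)}_{t+1}\,\sum_{j=1}^{n_{\mathrm{heads}}} W_O^{(l,j)}\,\frac{\partial u^{(l,j)}_{t+1}}{\partial h^{(l-1)}_{t'}},
\qquad u^{(l,j)}_{t+1}=\sum_{s\le t+1}\alpha^{(l,j)}_{t+1,s}\,W_V^{(l,j)} h^{(l-1)}_s ,
\end{equation*}
where $M^{(l)}_{t+1}$ is the Jacobian of the positionwise post-attention map. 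Each head output $u^{(l,j)}_{t+1}$ lies in a $d_{\mathrm{head}}$-dimensional space, and $W_O^{(l,j)}$ maps it into the residual stream. The cross-position Jacobian of one layer therefore factors through a space of dimension at most $n_{\mathrm{heads}} d_{\mathrm{head}}$, which bounds its rank by $n_{\mathrm{heads}} d_{\mathrm{head}}$.

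Next we unroll over layers. The total $J^{\attn}_{t+1\leftarrow t'}$ is a sum over the layers $l$ at which the path switches from position $t'$ to position $t+1$. Paths that switch several times through intermediate positions are grouped by their \emph{last} switch into $t+1$. Every such term contains one attention factor of rank at most $n_{\mathrm{heads}}d_{\mathrm{head}}$. Since rank is subadditive over the $L$ layers, the total rank is at most $L\cdot n_{\mathrm{heads}} d_{\mathrm{head}}$, which is the stated bound.

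For the magnitude claim, differentiate $u^{(l,j)}_{t+1}$ in $h^{(l-1)}_{t'}$. This gives a value term $\alpha_{t+1,t'}W_V$ plus a softmax term. The softmax term is $\sum_s \alpha_{t+1,s}\bigl(\delta_{st'}-\alpha_{t+1,t'}\bigr)W_V h_s\, k'^{\top}$, whose norm is at most $2\alpha_{t+1,t'}\max_s\|W_Vh_s\|\,\|W_K^{\top}q_{t+1}\|$. Hence each layer contributes $O\bigl(\alpha^{(l)}_{t+1,t'}\bigr)$ times weight and activation norms. Composing these bounds with the positionwise Jacobians bounds $\|J^{\attn}\|$ by a weighted sum of the attention weights along paths. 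The qualitative remark about broad attention then follows: diffuse $\alpha$ gives many nonzero path terms whose images span more directions. We present this as an interpretation, not a quantitative statement.

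The main obstacle is the multi-hop paths through intermediate positions $t'<s<t+1$. Grouping paths by their last switch handles the rank bound, but the magnitude bound needs the products of attention weights along a path to be summable. We would control this using $\sum_s\alpha_{\cdot,s}=1$ together with uniform norm bounds on the positionwise Jacobians. The resulting constant may grow geometrically in $L$, which the statement's ``controlled by'' phrasing allows.
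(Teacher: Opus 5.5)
Your proposal is correct and follows essentially the same route as the paper's proof. In both, each head's cross-position Jacobian factors through its $d_{\mathrm{head}}$-dimensional output, which gives rank at most $n_{\mathrm{heads}}d_{\mathrm{head}}$ per layer; subadditivity of rank over the $L$ layers then gives the stated bound, and the magnitude claim comes from the attention-weight factors in each head's derivative. Your version is more careful than the paper's sketch in two places. Grouping paths by their last switch into position $t+1$ handles multi-hop paths explicitly; this works because all terms in a group share the left factor $M^{(l)}_{t+1}$ and the output projections $W_O^{(l,j)}$. Your explicit computation also shows that the softmax-derivative term is itself $O(\alpha_{t+1,t'})$, which the paper only asserts.
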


\subsection{Approximation theorem}
\begin{theorem}[Discrete GRPO costates approximate BPTT costates]\label{thm:main}
Let $\hat\lambda_t$ be the empirical costate~\eqref{eq:empirical-costate} computed by
autodifferentiation through the GRPO loss, and let $\lambda_t$ be the exact
BPTT costate from Proposition~\ref{prop:adjoint} applied to the relaxed
rollout of \S\ref{sec:connect_grpo_vg}. Then
\begin{equation}\label{eq:main-bound}
    \bigl\|\mathbb{E}[\hat\lambda_t\mid h_t] - \mathbb{E}[\lambda_t\mid h_t]\bigr\|
    \;\leq\;
    \sum_{k=t+1}^{T}\gamma^{\,k-t}\cdot
    \underbrace{\bigl\|J_{k\leftarrow k-1}^{\attn}\bigr\|^{k-t-1}}_{\text{Jacobian chain growth}}
    \cdot
    \underbrace{\epsilon_k}_{\text{per-step sampling gap}},
\end{equation}
where $\epsilon_k$ is the per-step approximation error from the missing
sampling Jacobian, bounded by~\eqref{eq:gap-bound}.
\end{theorem}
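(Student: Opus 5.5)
The plan is to compare the two backward recursions term by term and unroll the difference, which gives a discrete Gronwall-type bound. First I fix the correspondence in \eqref{eq:corr1}--\eqref{eq:corr3}, so that the immediate-credit terms of the relaxed rollout and of the autodiff loss are identified. Concretely, I take the relaxed reward of \S\ref{sec:connect_grpo_vg} to be the advantage-weighted log-likelihood, so that $Dr(s_t,a_t)=\widehat A_t\,\partial\ell_t/\partial h_t^{(L)}$ on the state coordinate $s_t\equiv h_t^{(L)}$. Any residual mismatch in the immediate term is absorbed into $\epsilon_k$. With this choice, subtracting \eqref{eq:costate-recursion} from \eqref{eq:adjoint} gives, for the error $\delta_t:=\lambda_t-\hat\lambda_t$,
\begin{equation*}
\delta_t = \gamma\,\bigl(J^{\attn}_{t+1\leftarrow t}\bigr)^{\!\top}\delta_{t+1} + \gamma\,\bigl(\Delta_{t+1}\bigr)^{\!\top}\lambda_{t+1} + (\gamma-1)\bigl(J^{\attn}_{t+1\leftarrow t}\bigr)^{\!\top}\hat\lambda_{t+1},
\end{equation*}
where $\Delta_{t+1}:=Df_\theta^{\mathrm{exact}}-J^{\attn}_{t+1\leftarrow t}$ is the missing sampling-path term of \eqref{eq:jacobian-decomp}, and $\delta_{T+1}=0$. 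I will either take the discount into the empirical recursion (weighting $\widehat A_k$ by $\gamma^{k-t}$) so that the last term vanishes, or fold it into $\epsilon_k$. I will state this convention explicitly.

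Second, I unroll the linear recursion from $T$ down to $t$. This writes $\delta_t$ as a sum over $k=t+1,\dots,T$ of the forcing terms $\gamma(\Delta_k)^\top\lambda_k$, each transported back by the product $\prod_{j=t+1}^{k-1}\gamma (J^{\attn}_{j\leftarrow j-1})^\top$. I then take conditional expectations given $h_t$ and move the norm inside using Jensen. Submultiplicativity, together with a uniform bound $\|J^{\attn}_{j\leftarrow j-1}\|\le \|J^{\attn}\|$ along the chain, bounds the transport product by $\gamma^{k-t-1}\|J^{\attn}\|^{k-t-1}$. Here I interpret the paper's $\|J^{\attn}_{k\leftarrow k-1}\|$ as this supremum. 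The forcing term contributes an extra factor $\gamma$, giving the stated $\gamma^{k-t}$.

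Third, I define $\epsilon_k:=\mathbb{E}\bigl[\|\Delta_k\|\,\|\lambda_k\|\mid h_t\bigr]$, or more simply $\sup\|\lambda_k\|\cdot\|\Delta_k\|$ under a bounded-costate assumption. Proposition~\ref{prop:sampling-gap} then gives $\epsilon_k\le C'\sqrt{H_k/\log|V|}$, with the costate bound absorbed into the constant. Summing over $k$ yields \eqref{eq:main-bound}.

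The main obstacle is making $\Delta_k$ well defined. In the discrete chain \eqref{eq:chain} the sampling Jacobian does not exist pointwise, so the comparison is only meaningful for the relaxed (straight-through or Gumbel-Softmax) surrogate. It must also be taken in conditional expectation, since $\lambda_k$ and $\Delta_k$ are correlated through the same noise. My first attempt would be to work entirely inside the relaxed rollout, where everything is differentiable and $\hat\lambda$ is the costate of the same graph with the sampling edge cut. There the expectation of the cut edge is exactly what Proposition~\ref{prop:sampling-gap} controls, and boundedness of $\lambda_k$ follows from bounded rewards and Jacobians over the finite horizon.
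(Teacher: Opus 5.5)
Your proposal follows the paper's proof. Both form the error recursion between $\hat\lambda_t$ and $\lambda_t$ and isolate the forcing term $\gamma(Df_\theta^{\mathrm{exact}}-J^{\attn}_{t+1\leftarrow t})^\top\lambda_{t+1}$. Both unroll from $\delta_{T+1}=0$ into a sum of sampling gaps transported by products of attention Jacobians, read (as the paper also says) as a uniform bound on the product. Both then control $\epsilon_k$ by Proposition~\ref{prop:sampling-gap}.

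Where you differ, you are mostly more careful.
\begin{itemize}
\item The paper writes the recursion directly for $\delta_t=\mathbb{E}[\hat\lambda_t\mid h_t]-\mathbb{E}[\lambda_t\mid h_t]$. This tacitly pulls the random Jacobians, which depend on later sampled tokens, out of the conditional expectation. It also puts $\delta_{t+1}$, a function of $h_{t+1}$, into an identity conditioned on $h_t$. Your pathwise recursion followed by Jensen avoids both problems. The price is a deterministic Jacobian bound and a larger $\epsilon_k=\mathbb{E}[\|\Delta_k\|\,\|\lambda_k\|\mid h_t]$ in place of the paper's $\|\Delta_k\|\,\|\mathbb{E}[\lambda_k\mid h_t]\|$.
\item You correctly notice that \eqref{eq:costate-recursion} carries no $\gamma$; the paper silently inserts $\gamma$ into the empirical recursion in its proof. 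Of your two options, only the first works: discounted advantages, or $\gamma=1$. Folding $(\gamma-1)(J^{\attn}_{t+1\leftarrow t})^\top\hat\lambda_{t+1}$ into $\epsilon_k$ produces a term that does not depend on entropy. That contradicts the requirement that $\epsilon_k$ be bounded by \eqref{eq:gap-bound}.
\end{itemize}

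One step needs repair. $D$ is a total derivative, so taking the relaxed reward to be the advantage-weighted log-likelihood of the relaxed action does not give $Dr(s_t,a_t)=\widehat A_t\,\partial\ell_t/\partial h_t^{(L)}$. $Dr$ also contains $(\partial\pi_\theta/\partial s)^\top\partial r/\partial a$, an immediate term that passes through the same sampling Jacobian. That residual is controlled by entropy by the same argument as Proposition~\ref{prop:sampling-gap}. At steps $k>t$ you can absorb it into $\epsilon_k$, since it is transported back by one extra Jacobian factor. The residual at $k=t$, however, has no slot in a sum that starts at $k=t+1$. The paper handles this only by assumption: it asserts, via Lemma~\ref{lem:shift}, that the immediate credit terms agree in conditional expectation. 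You need either to adopt that assumption explicitly or to add a $k=t$ term to the bound.
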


Combining Theorem~\ref{thm:main} with identity~\eqref{eq:costate-is-vg},
\[
    \mathbb{E}[\hat\lambda_t\mid h_t] \;\approx\; \mathbb{E}[\lambda_t\mid h_t] \;=\; G_t^\pi(h_t),
\]
so the empirical costate is, up to the Theorem~\ref{thm:main} error, a Monte
Carlo estimator of the value gradient in the transformer's hidden-state
space. The error accumulates through the Jacobian chain but is controlled by \emph{policy entropy at each step}, and spectral properties of the attention Jacobians. Figure~\ref{fig:bound} evaluates the entropy bound \ref{prop:sampling-gap} formula, see more details in \S\ref{sec:experiments}.

\begin{takeawaybox}
In discrete transformers, autodiff already propagates an empirical costate through attention,
so critic-free RL remains close to the BPTT value-gradient picture despite discrete token sampling.
\end{takeawaybox}

\section{RL Impact Law Hypotheses}
\label{sec:rl-readiness}
The previous sections explain that the actor backward pass propagates an approximate hidden-state value-gradient signal, and the gap to the ideal value-gradient regime is controlled by the discrete sampling step together with attention-based credit transport. We now turn this into a predictive theory of \emph{RL readiness}. The key idea is simple: RL should be effective when the actor update contain a usable value-gradient signal. This gives a direct prediction for how much gain RL should produce from a given pretrained checkpoint. For any critic-free normalized policy-gradient method \(m\), let \(\hat\lambda_t^{(m)}\) denote the empirical hidden-state costate produced by the actor backward pass, and let \(G_t\) denote the hidden-state value gradient. The quantity
\begin{equation}
\mathbb E_{q,\tau,t}
\left[
\left\langle
\mathbb E[\hat\lambda_t^{(m)} \mid h_t],\,
G_t
\right\rangle
\right]
\label{eq:aligned-signal}
\end{equation}
measures how much useful RL signal the checkpoint actually sends. Using $\langle u,v\rangle = \|v\|_2^2 + \langle u-v,v\rangle$, we obtain the lower bound
\begin{equation}
\mathbb E_{q,\tau,t}
\left[
\left\langle
\mathbb E[\hat\lambda_t^{(m)} \mid h_t],\,
G_t
\right\rangle
\right]
\ge
\Sigma(\theta)-\Lambda_m(\theta),
\label{eq:signal-lower-bound}
\end{equation}
where
\begin{equation}
\Sigma(\theta)
:=
\mathbb E_{q,\tau,t}\!\left[\|G_t\|_2^2\right],
\qquad
\Lambda_m(\theta)
:=
\mathbb E_{q,\tau,t}
\left[
\|G_t\|_2\,
\left\|
\mathbb E[\hat\lambda_t^{(m)} \mid h_t]-G_t
\right\|_2
\right].
\label{eq:sigma-lambda}
\end{equation}
We therefore define the \emph{usable value-gradient signal}
\begin{equation}
S_m(\theta)
:=
\left(\Sigma(\theta)-\Lambda_m(\theta)\right)
\label{eq:usable-signal}
\end{equation}
Intuitively, \(\Sigma(\theta)\) measures how much value-gradient signal is available, while \(\Lambda_m(\theta)\) measures how much of that signal is lost because the actual actor update is still separated from the value-gradient regime. By the discrete approximation result of \S\ref{sec:discrete_grpo}, the gap term inside \(\Lambda_m(\theta)\) becomes small when the entropy-controlled sampling gap is small and attention transports credit effectively. Signal quality alone is not enough. Even a clean RL signal will produce little gain if the current policy has no useful reward headroom left to exploit. We therefore define the \emph{reachable headroom}
\begin{equation}
\mathcal H_\alpha(\theta)
:=
\mathbb E_{q \sim P(Q)}
\left[
\frac{1}{\alpha}
\log
\mathbb E_{\tau \sim \pi_\theta(\cdot \mid q)}
\left[
e^{\alpha R(\tau)}
\right]
-
\mathbb E_{\tau \sim \pi_\theta(\cdot \mid q)}
\left[
R(\tau)
\right]
\right],
\label{eq:headroom}
\end{equation}
where \(R(\tau)\) is the trajectory return. This quantity measures how much reward can still be gained by reweighting the model's existing trajectory distribution. It is small both when the model is too weak to sample any good trajectories and when it is already close to saturation; it is large only when better trajectories are already present in support and can still be amplified by RL. These two terms combine into the central predictive statement of the framework:
\begin{equation}
\boxed{
\text{RL Impact}
\propto
\underbrace{S_m(\theta)}_{\text{value-gradient signal}}
\times
\underbrace{\mathcal H_\alpha(\theta)}_{\text{reachable headroom}}}.
\label{eq:readiness-summary}
\end{equation}
For a fixed task, reward, and RL budget \(B\), we write this in the simplest one-constant form as
\begin{equation}
\Delta \mathrm{Perf}_{m,B}(\theta)
\approx
\kappa_{m,B}\,
S_m(\theta)\,
\mathcal H_\alpha(\theta),
\qquad
\kappa_{m,B}>0.
\label{eq:predictive-law}
\end{equation}
Eq.~\eqref{eq:readiness-summary} is the main prediction of the theory. RL should work best when the checkpoint is already close enough to the value-gradient regime to provide a usable actor signal, but still has enough reward headroom for RL to exploit. Practically, if \(\{\theta_N\}\) is a sequence of pretrained \emph{checkpoints}, then the predicted best point to start RL is
\begin{equation}
N^\star
=
\arg\max_N
S_m(\theta_N)\,
\mathcal H_\alpha(\theta_N).
\label{eq:best-switch-point}
\end{equation}
\begin{takeawaybox}
RL gain should scale with two factors at once: how much usable value-gradient signal the model transmits,
and how much reward-improving headroom remains in its trajectory distribution. 
\end{takeawaybox}

\section{Experiments}
\label{sec:experiments}
\begin{figure}[h!] 
    \vspace{-5mm}
    \centering
    \includegraphics[width=0.51\linewidth]{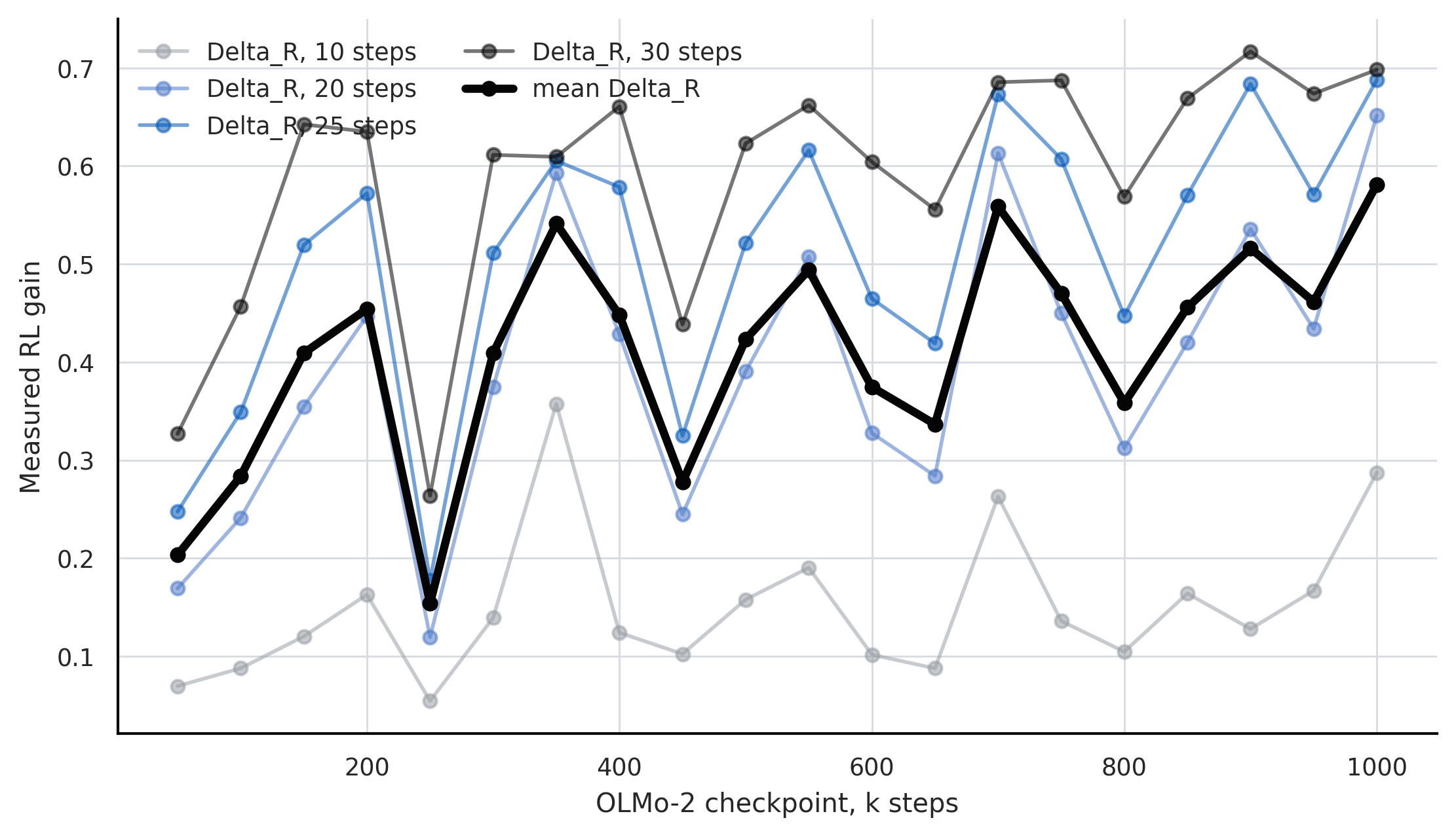}
    \vspace{0pt}
    \includegraphics[width=0.47\linewidth]{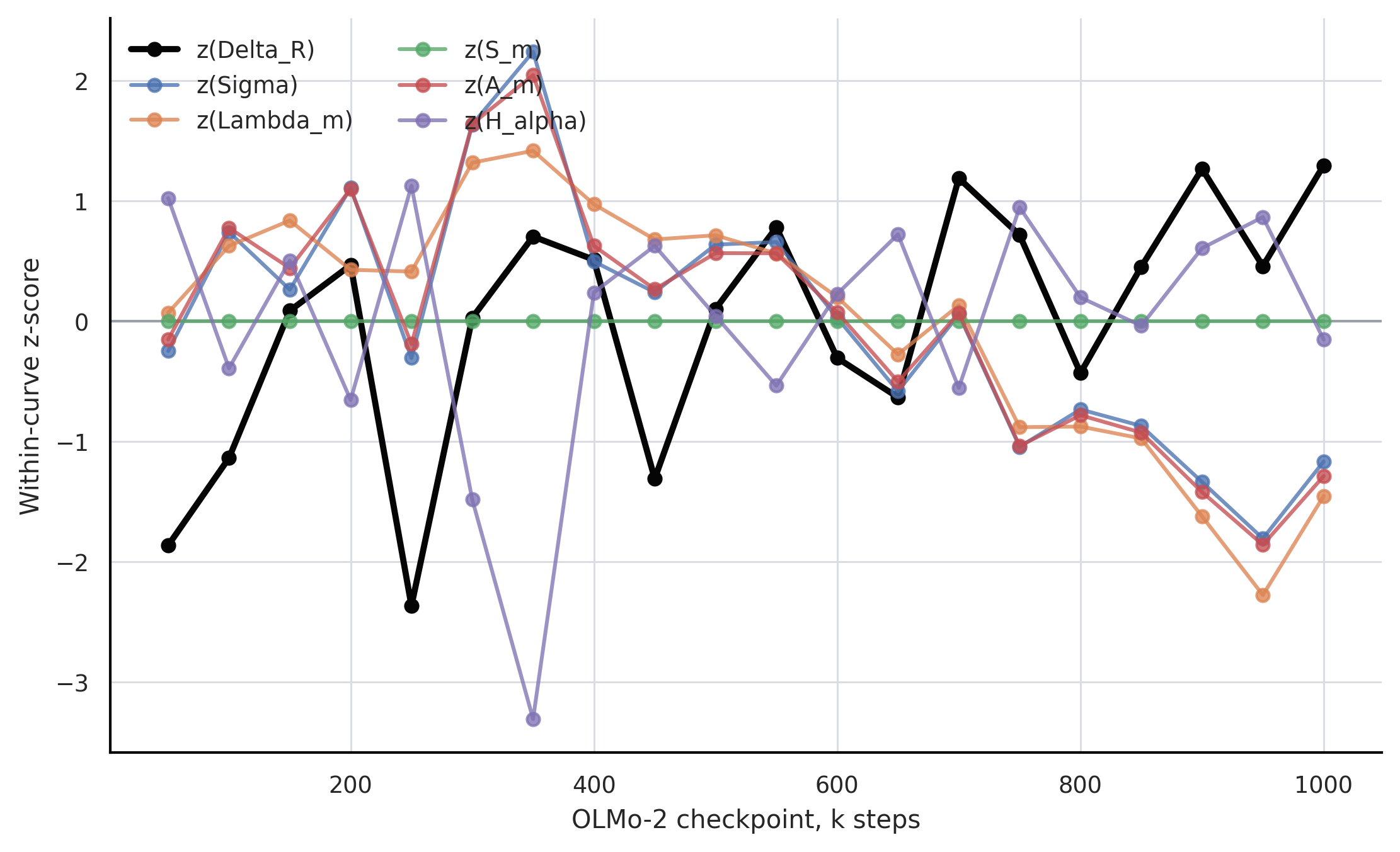}
    \vspace{-3mm}
    \caption{Results of the RL upon various OLMO-2 pretraining checkpoints. The left image shows the consistency of the achieved gains via RL using a different number of GRPO steps. The right image shows the behavior of the different components of our study.}
    \label{fig:components_gap}
    \vspace{-5mm}
\end{figure} 
We evaluate the proposed theory in two complementary settings. First, we test the entropy-controlled approximation bound from \S4 on OLMo-2~\citep{olmo20242olmo2furious} checkpoints. Second, we test whether the costate-based impact score predicts which checkpoints benefit most from a RL. 
\begin{wrapfigure}{r}{0.49\textwidth} 
    \vspace{-1mm}
    \centering
    \includegraphics[width=\linewidth]{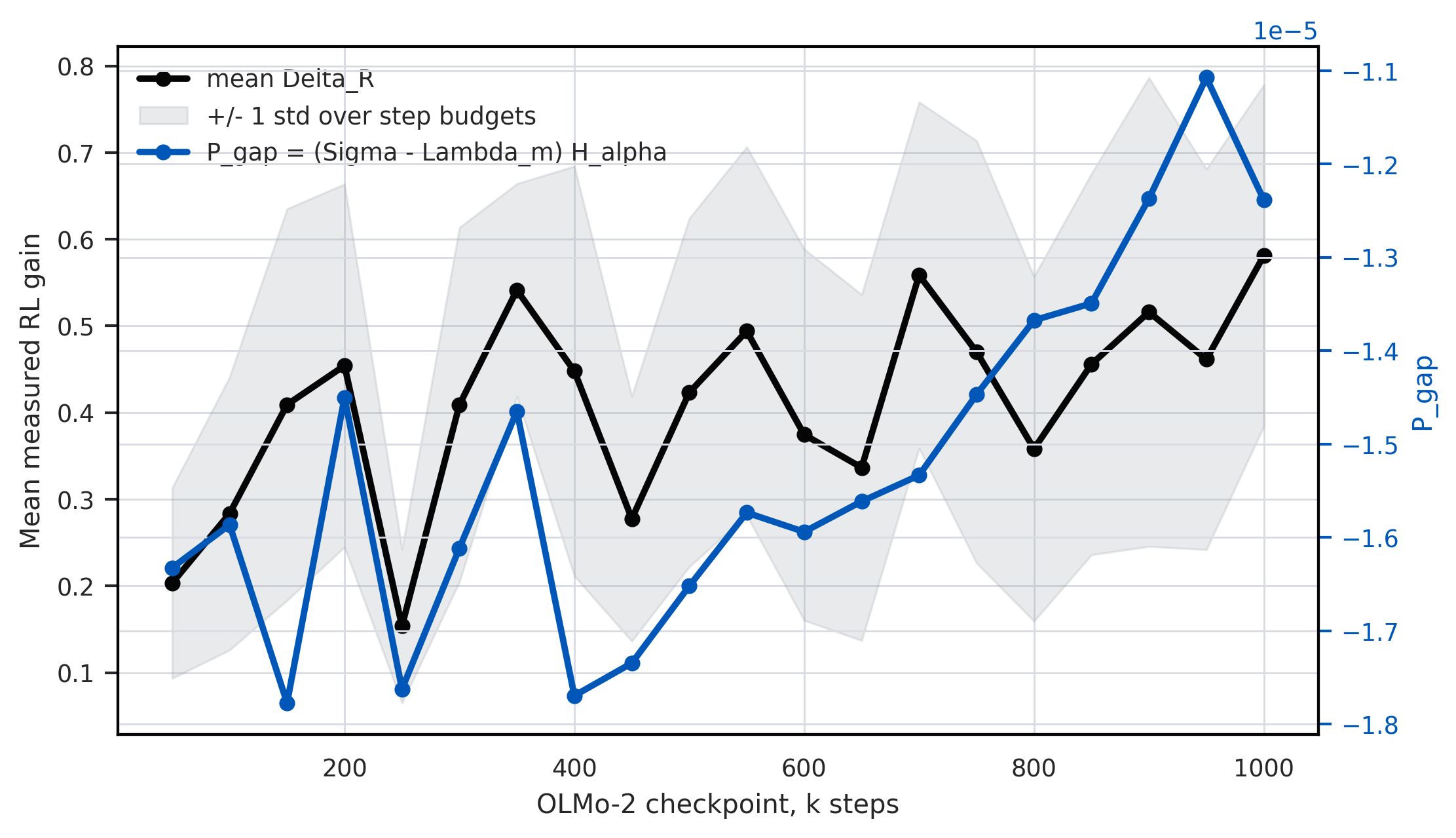}
    \caption{Real RL gain vs. predicted one using value impact formula (Section \ref{sec:rl-readiness}, Eq. \ref{eq:readiness-summary}).}
    \label{fig:rl_gain_vs_real}
    \vspace{-2mm}
\end{wrapfigure}
\textbf{Closed-form RL task.}
For the RL-impact experiment, we use OLMo-2 1B checkpoints from pretraining steps $50$k to $1$M in increments of $50$k. The task is a controlled label-copying problem. Given a prompt containing a target label in $\{A,B,C,D\}$, the model must put probability mass on the matching answer token. The reward is $R(\theta;q)=p_\theta(y^\star \mid q)$, where $y^\star$ is the correct label. This reward is differentiable in the model logits and avoids decoded-text, classifier, or parser discontinuities. For each checkpoint $\theta_N$, we measure the pre-RL reward $R_{\rm before}(\theta_N)=\mathbb{E}_{q}\,p_{\theta_N}(y^\star\mid q)$, then run GRPO from the checkpoint and measure the final post-RL reward $R_{\rm after}(\theta_N,K)$, where $K$ is the number of RL updates. We use several RL budgets, $K\in\{10,20,25,30\}$, and report both individual-budget results and the averaged response $\overline{\Delta R}(\theta_N)=\mathbb{E}_{K}\left[R_{\rm after}(\theta_N,K)-R_{\rm before}(\theta_N)\right]$.

\textbf{Entropy-bound verification.}
Figure~\ref{fig:bound} evaluates the entropy bound from \S\ref{sec:discrete_grpo}. In this experiment, we exactly computed our Proposition \ref{prop:sampling-gap} formula, comparing the real right-hand side $\epsilon$ with the entropy-based left side. For the OLMO-2 model, we estimate the costate approximation error and compare it to the entropy-controlled upper bound predicted by the theory, using the reward function described above. The purpose of this experiment is to verify our claim that the discrete-sampling gap is controlled by entropy.

\textbf{Costate-based predictors.}
For each checkpoint, we compute the true pathwise costate $G_t = \partial R/\partial h_t$ and the detached-reward RL costate $\widehat{\lambda}_t=\partial[\widehat{A}_t \log \pi_\theta(a_t\mid s_t)]/\partial h_t$, using the same prompts, hidden states, sampled actions, and reward evaluations. This matched-trajectory computation avoids mixing the relaxed and discrete distributions. We then estimate $\Sigma=\mathbb{E}\|G_t\|^2$, $\Lambda_m=\mathbb{E}[\|G_t\|\,\|\widehat{\lambda}_t-G_t\|]$, and the headroom term $H_\alpha=\frac{1}{\alpha}\log \mathbb{E}[\exp(\alpha R)]-\mathbb{E}[R]$. The impact score is $P_{\rm gap}=(\Sigma-\Lambda_m)H_\alpha$, which preserves the ordering information. 

\textbf{RL-response prediction.}
Figure~\ref{fig:costates_25}, \ref{fig:components_gap} and \ref{fig:rl_gain_vs_real} shows the costate components and the impact predictor. The component plot shows that the raw quantities vary systematically across checkpoints, while the predictor tracks the broad shape of the measured RL gain curve. This supports the mechanism suggested by the theory: checkpoints differ not only in their current reward, but also in the quality of their costate signal and their remaining reward headroom.

Figure~\ref{fig:costates_z_score} evaluates the final predictive story. The left panel compares the realized mean post-RL reward against $\widehat{R}_{\rm after}=R_{\rm before}+{\rm Affine}(P_{\rm gap})$, where the affine map calibrates the scale of $P_{\rm gap}$ to the observed reward-gain scale. The right panel uses a scale-free version, $z(R_{\rm before}) + z(P_{\rm gap})$, and compares it to $z(\overline{R}_{\rm after})$, where $z(\cdot)$ denotes normalization across checkpoints. This asks whether pretrained competence plus predicted RL readiness explains the final post-RL checkpoint quality. Empirically, the combined predictor correlates more strongly with averaged post-RL reward than either current reward or the impact score alone. Across the RL budgets, the score satisfies ${\rm Spearman}(P_{\rm gap},\overline{\Delta R})\approx 0.60$, while the combined scale-free predictor satisfies ${\rm Spearman}(z(R_{\rm before})+z(P_{\rm gap}),z(\overline{R}_{\rm after}))\approx 0.73$. These results suggest that the theory captures a real checkpoint-dependent RL-readiness signal, but also that current competence remains necessary for predicting the final post-RL reward level.

\begin{figure}[t!] 
    \centering
    \includegraphics[width=0.42\linewidth]{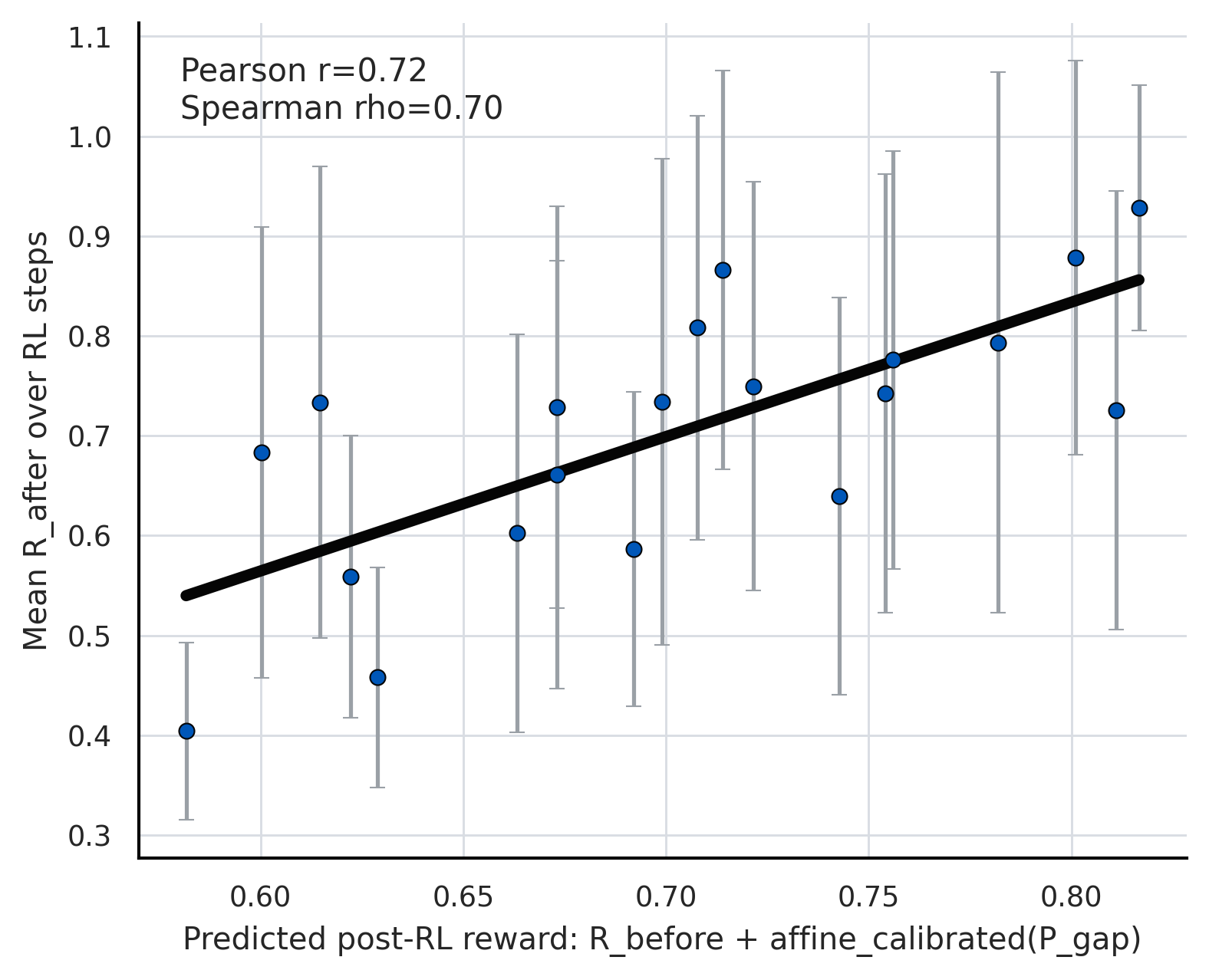}
    \vspace{0pt}
    \includegraphics[width=0.57\linewidth]{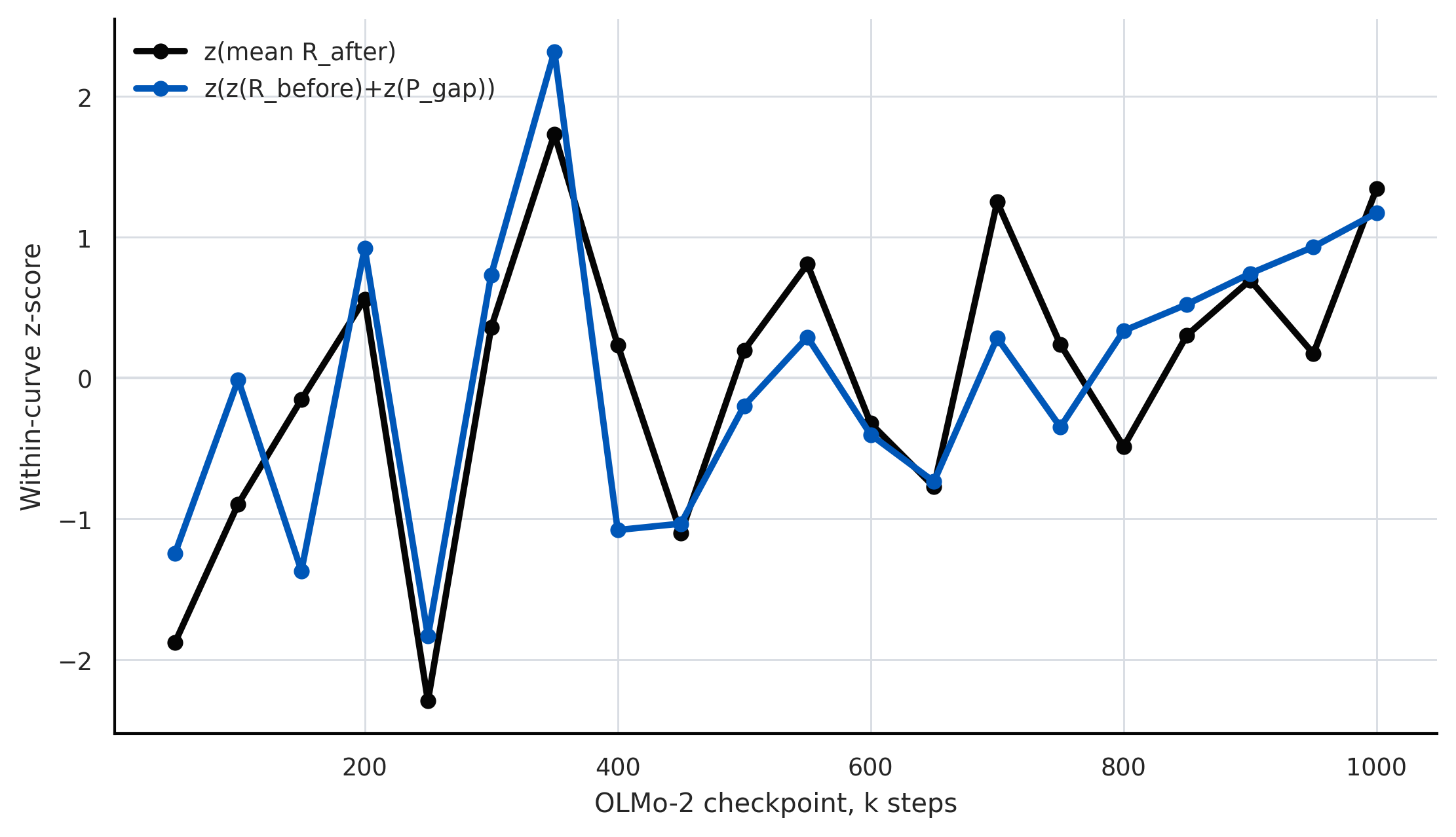}
    \vspace{-5mm}
    \caption{Z-scores of the gain after RL vs. z-scores of the predicted RL impact. Correlation (left) and curve per checkpoint (right). The std bars on left indicate RL gain variance over the various training steps.}
    \label{fig:costates_z_score}
    \vspace{-5mm}
\end{figure} 

\textbf{Limitations}. The current setup is controlled and partially toy-like. The reward is deliberately chosen to be differentiable in the model logits, which allows exact costate measurements. Many parameters affect the observed correspondence, including reward scale, RL learning rate, batch size, group size, and the number of RL updates. Our method may indicate that a checkpoint is ready for efficient RL, but how RL is performed remains an additional optimization question. Indeed, too few RL steps under-realize the predicted headroom, while too many steps saturate the reward and obscure checkpoint differences. For this reason, we report averages across several RL budgets. 

\vspace{-3mm}
\section{Related Work and Discussion}
\vspace{-3mm}
There is a large body of work on extending the GRPO algorithm. Methods such as these are essentially additional techniques built on top of GRPO, and our methodology extrapolates well with respect to such modifications.In terms of the critic-free RL interpretation, it has been shown that GRPO implicitly computes a value-like quantity via U-statistics \citep{zhou2026demystifying}. However, this account is specific to GRPO, while PPO \citep{schulman2017proximal} and even REINFORCE also perform unexpectedly well in the LLM setting. Moreover, evidence from standard RL suggests that group-normalized episodic baselines are generally insufficient to replace a critic for bootstrapping and temporal credit assignment \citep{deOliveira}. Therefore, we argue that the gains of RL in LLMs come not only from the GRPO baseline design.

\cite{zheng2025soft} propose using BPTT diretcly instead of RL and obtain better empirical results, which according to our interpretation should make the value-gradient signal propagates better. Another interesting observation is that scalar continuous rewards were used in the latest DeepSeek-V4 model \citep{deepseekai2026deepseekv4} to achieve better results. Recently, value gradient flow-based learning was proposed \citep{xu2026reinforcement}, which achieved sustainable results in RL tasks, signaling that the value signal is useful. However, this method differs from ours because their approach is actor-free, showing how the value gradient can be used as a policy, while we focus on the critic-free LLMs.
\vspace{-3mm}
\section{Conclusion}
\vspace{-3mm}
This paper argues that critic-free RL in LLM post-training works because the actor backward pass is not value-free: it carries a value-gradient-like signal. In a differentiable rollout this signal is exact in expectation, and in discrete transformers it survives approximately because attention provides a differentiable channel for temporal credit transport while the missing sampling path is controlled by policy entropy. This perspective leads to a simple prediction: RL gains should be largest when a checkpoint has both strong usable credit-assignment signal and remaining reward headroom. More broadly, the paper suggests that the success of critic-free RL in LLMs is not an exception to credit-assignment theory, but a consequence of hidden-state computation in continues space.


\bibliography{iclr2026_conference}

@article{shao2024deepseekmath,
  title={Deepseekmath: Pushing the limits of mathematical reasoning in open language models},
  author={Shao, Zhihong and Wang, Peiyi and Zhu, Qihao and Xu, Runxin and Song, Junxiao and Bi, Xiao and Zhang, Haowei and Zhang, Mingchuan and Li, YK and Wu, Yang and others},
  journal={arXiv preprint arXiv:2402.03300},
  year={2024}
}

@inproceedings{fairbank2012value,
  title={Value-gradient learning},
  author={Fairbank, Michael and Alonso, Eduardo},
  booktitle={The 2012 international joint conference on neural networks (ijcnn)},
  pages={1--8},
  year={2012},
  organization={IEEE}
}

@inproceedings{rezende2014stochastic,
  title={Stochastic backpropagation and approximate inference in deep generative models},
  author={Rezende, Danilo Jimenez and Mohamed, Shakir and Wierstra, Daan},
  booktitle={International conference on machine learning},
  pages={1278--1286},
  year={2014},
  organization={PMLR}
}

@article{schulman2015gradient,
  title={Gradient estimation using stochastic computation graphs},
  author={Schulman, John and Heess, Nicolas and Weber, Theophane and Abbeel, Pieter},
  journal={Advances in neural information processing systems},
  volume={28},
  year={2015}
}

@article{schulman2017proximal,
  title={Proximal policy optimization algorithms},
  author={Schulman, John and Wolski, Filip and Dhariwal, Prafulla and Radford, Alec and Klimov, Oleg},
  journal={arXiv preprint arXiv:1707.06347},
  year={2017}
}

@article{deOliveira,
  title={Learning Without Critics? Revisiting GRPO in Classical Reinforcement Learning Environments.},
  author={de Oliveira, Bryan LM, et al},
  journal={Arxiv},
  volume={12},
  year={2025}
}

@article{zhou2026demystifying,
  title={Demystifying Group Relative Policy Optimization: Its Policy Gradient is a U-Statistic},
  author={Zhou, Hongyi and Ye, Kai and Xu, Erhan and Zhu, Jin and Gong, Shijin and Shi, Chengchun},
  journal={arXiv preprint arXiv:2603.01162},
  year={2026}
}

@article{zheng2025soft,
  title={Soft-grpo: Surpassing discrete-token llm reinforcement learning via gumbel-reparameterized soft-thinking policy optimization},
  author={Zheng, Zhi and Gu, Yu and Liu, Wei and Teh, Yee Whye and Lee, Wee Sun},
  journal={arXiv preprint arXiv:2511.06411},
  year={2025}
}

@misc{olmo20242olmo2furious,
      title={{2 OLMo 2 Furious}},
      author={{Team OLMo} and Pete Walsh and Luca Soldaini and Dirk Groeneveld and Kyle Lo and Shane Arora and Akshita Bhagia and Yuling Gu and Shengyi Huang and Matt Jordan and Nathan Lambert and Dustin Schwenk and Oyvind Tafjord and Taira Anderson and David Atkinson and Faeze Brahman and Christopher Clark and Pradeep Dasigi and Nouha Dziri and Michal Guerquin and Hamish Ivison and Pang Wei Koh and Jiacheng Liu and Saumya Malik and William Merrill and Lester James V. Miranda and Jacob Morrison and Tyler Murray and Crystal Nam and Valentina Pyatkin and Aman Rangapur and Michael Schmitz and Sam Skjonsberg and David Wadden and Christopher Wilhelm and Michael Wilson and Luke Zettlemoyer and Ali Farhadi and Noah A. Smith and Hannaneh Hajishirzi},
      year={2024},
      eprint={2501.00656},
      archivePrefix={arXiv},
      primaryClass={cs.CL},
      url={https://arxiv.org/abs/2501.00656},
}

@misc{deepseekai2026deepseekv4,
      title={DeepSeek-V4: Towards Highly Efficient Million-Token Context Intelligence},
      author={DeepSeek-AI},
      year={2026},
}

@article{xu2026reinforcement,
  title={Reinforcement Learning via Value Gradient Flow},
  author={Xu, Haoran and Hu, Kaiwen and Sojoudi, Somayeh and Zhang, Amy},
  journal={arXiv preprint arXiv:2604.14265},
  year={2026}
}
\bibliographystyle{iclr2026_conference}

\newpage
\appendix


\section{Proofs}
\label{app:proofs}

This appendix collects the proofs and supporting derivations for the statements
used in the main text. We keep the notation of the main paper: the
reparameterized action is written as
\(a_t=\pi_\theta(s_t,\xi_t)\), the differentiable transition as
\(s_{t+1}=f_\theta(s_t,a_t)\), and the costate as
\(\lambda_t=\partial R_t/\partial s_t\).

\subsection{Pathwise differentiation under a differentiable rollout}

Under Definition~\ref{def:reparam}, all stochasticity enters through the
exogenous noise variables \(\xi_{1:T}\), whose law is independent of
\(\theta\). Hence, for a fixed noise realization, the sampled return
\(R_1(\theta,\xi_{1:T})\) is a deterministic differentiable function of
\(\theta\). Therefore,
\[
J(\theta)
=
\mathbb{E}_{\xi_{1:T}}\!\left[R_1(\theta,\xi_{1:T})\right]
=
\int R_1(\theta,\xi_{1:T})\,p(\xi_{1:T})\,d\xi_{1:T}.
\]
Under the differentiability and domination assumptions implicit in
Definition~\ref{def:reparam}, differentiation may be interchanged with
integration:
\[
\frac{\partial J(\theta)}{\partial\theta}
=
\int
\frac{\partial R_1(\theta,\xi_{1:T})}{\partial\theta}
p(\xi_{1:T})\,d\xi_{1:T}
=
\mathbb{E}\!\left[
\frac{\partial R_1}{\partial\theta}
\right].
\]
Thus, in the differentiable rollout model, the policy-gradient direction can
be computed pathwise by differentiating the unrolled trajectory computation.

\subsection{Proof of Proposition~\ref{prop:adjoint}: adjoint recursion}

\begin{proof}
Recall that
\[
R_t
=
r(s_t,a_t)+\gamma R_{t+1},
\qquad
a_t=\pi_\theta(s_t,\xi_t),
\qquad
s_{t+1}=f_\theta(s_t,a_t),
\qquad
\lambda_t:=\frac{\partial R_t}{\partial s_t}.
\]
Throughout the proof, the noise variables \(\xi_{t:T}\) are held fixed.

First differentiate \(R_t\) with respect to \(s_t\). The immediate reward
contributes the total derivative
\[
D r(s_t,a_t)
=
\frac{\partial r(s_t,a_t)}{\partial s}
+
\left(\frac{\partial \pi_\theta(s_t,\xi_t)}{\partial s}\right)^{\!\top}
\frac{\partial r(s_t,a_t)}{\partial a}.
\]
The future return contributes through \(s_{t+1}=f_\theta(s_t,a_t)\):
\[
\gamma
\left(
\frac{\partial f_\theta(s_t,a_t)}{\partial s}
+
\frac{\partial f_\theta(s_t,a_t)}{\partial a}
\frac{\partial \pi_\theta(s_t,\xi_t)}{\partial s}
\right)^{\!\top}
\lambda_{t+1}
=
\gamma
\bigl(D f_\theta(s_t,a_t)\bigr)^{\!\top}
\lambda_{t+1}.
\]
Therefore,
\[
\lambda_t
=
D r(s_t,a_t)
+
\gamma
\bigl(D f_\theta(s_t,a_t)\bigr)^{\!\top}
\lambda_{t+1},
\]
which is the adjoint recursion~\eqref{eq:adjoint}. It remains to derive the parameter-gradient expression. Let
\[
\dot s_t := \frac{d s_t}{d\theta}.
\]
The total state sensitivity satisfies
\[
\dot s_{t+1}
=
D f_\theta(s_t,a_t)\,\dot s_t
+
\frac{\partial f_\theta(s_t,a_t)}{\partial\theta}
+
\frac{\partial f_\theta(s_t,a_t)}{\partial a_t}
\frac{\partial \pi_\theta(s_t,\xi_t)}{\partial\theta}.
\]
The total derivative of the return is
\[
\frac{dR_1}{d\theta}
=
\sum_{t=1}^{T}\gamma^{t-1}
\left[
\dot s_t^{\top}D r(s_t,a_t)
+
\left(
\frac{\partial \pi_\theta(s_t,\xi_t)}{\partial\theta}
\right)^{\!\top}
\frac{\partial r(s_t,a_t)}{\partial a_t}
\right].
\]
Using the adjoint identity
\[
D r(s_t,a_t)
=
\lambda_t
-
\gamma
\bigl(D f_\theta(s_t,a_t)\bigr)^{\!\top}
\lambda_{t+1},
\]
we obtain
\[
\begin{aligned}
\frac{dR_1}{d\theta}
=
\sum_{t=1}^{T}\gamma^{t-1}
\Bigg[
&
\left(
\frac{\partial \pi_\theta(s_t,\xi_t)}{\partial\theta}
\right)^{\!\top}
\frac{\partial r(s_t,a_t)}{\partial a_t}
+
\dot s_t^{\top}\lambda_t
\\
&-
\gamma
\dot s_t^{\top}
\bigl(D f_\theta(s_t,a_t)\bigr)^{\!\top}
\lambda_{t+1}
\Bigg].
\end{aligned}
\]
From the state-sensitivity recursion,
\[
D f_\theta(s_t,a_t)\,\dot s_t
=
\dot s_{t+1}
-
\frac{\partial f_\theta(s_t,a_t)}{\partial\theta}
-
\frac{\partial f_\theta(s_t,a_t)}{\partial a_t}
\frac{\partial \pi_\theta(s_t,\xi_t)}{\partial\theta}.
\]
Substituting this into the previous display gives
\[
\begin{aligned}
\frac{dR_1}{d\theta}
=
\sum_{t=1}^{T}\gamma^{t-1}
\Bigg[
&
\left(
\frac{\partial \pi_\theta(s_t,\xi_t)}{\partial\theta}
\right)^{\!\top}
\frac{\partial r(s_t,a_t)}{\partial a_t}
\\
&+
\gamma
\left(
\frac{\partial f_\theta(s_t,a_t)}{\partial\theta}
+
\frac{\partial f_\theta(s_t,a_t)}{\partial a_t}
\frac{\partial \pi_\theta(s_t,\xi_t)}{\partial\theta}
\right)^{\!\top}
\lambda_{t+1}
\Bigg]
\\
&+
\sum_{t=1}^{T}\gamma^{t-1}
\left[
\dot s_t^{\top}\lambda_t
-
\gamma
\dot s_{t+1}^{\top}\lambda_{t+1}
\right].
\end{aligned}
\]
The second sum telescopes:
\[
\sum_{t=1}^{T}\gamma^{t-1}
\left[
\dot s_t^{\top}\lambda_t
-
\gamma
\dot s_{t+1}^{\top}\lambda_{t+1}
\right]
=
\dot s_1^{\top}\lambda_1
-
\gamma^T\dot s_{T+1}^{\top}\lambda_{T+1}.
\]
The initial state is the prompt state and is independent of \(\theta\), so
\(\dot s_1=0\). Also \(\lambda_{T+1}=0\) by definition. Hence the telescoping
term vanishes. Collecting the remaining terms yields
\[
\begin{aligned}
\frac{dR_1}{d\theta}
=
\sum_{t=1}^{T}\gamma^{t-1}
\Bigg[
&
\gamma
\left(
\frac{\partial f_\theta(s_t,a_t)}{\partial\theta}
\right)^{\!\top}
\lambda_{t+1}
\\
&+
\left(
\frac{\partial \pi_\theta(s_t,\xi_t)}{\partial\theta}
\right)^{\!\top}
\left(
\frac{\partial r(s_t,a_t)}{\partial a_t}
+
\gamma
\left(
\frac{\partial f_\theta(s_t,a_t)}{\partial a_t}
\right)^{\!\top}
\lambda_{t+1}
\right)
\Bigg].
\end{aligned}
\]
Taking expectation over the exogenous noise gives~\eqref{eq:bptt-grad}.
\end{proof}

\subsection{Proof of identity~\eqref{eq:costate-is-vg}: costates are value-gradient estimators}

\begin{proof}
By definition,
\[
V_t^\pi(s)
=
\mathbb{E}\!\left[R_t \mid s_t=s\right].
\]
Conditioned on \(s_t=s\), the remaining randomness comes only from the future
exogenous noises \(\xi_{t:T}\), whose distribution is independent of \(s\) and
\(\theta\). Therefore, under the regularity conditions of
Definition~\ref{def:reparam},
\[
\begin{aligned}
G_t^\pi(s)
&=
\frac{\partial V_t^\pi(s)}{\partial s}
=
\frac{\partial}{\partial s}
\mathbb{E}\!\left[R_t\mid s_t=s\right]
\\
&=
\mathbb{E}\!\left[
\frac{\partial R_t}{\partial s_t}
\;\middle|\;
s_t=s
\right]
=
\mathbb{E}\!\left[\lambda_t\mid s_t=s\right].
\end{aligned}
\]
Thus the BPTT costate is a Monte Carlo sample whose conditional expectation is
the value gradient.
\end{proof}

\subsection{Proof of Lemma~\ref{lem:shift}: SF \(=\) PD under a shift policy}

\begin{proof}
Fix \(s\). Let
\[
\pi_\theta(a\mid s)=\nu(a-\bar a_\theta(s)),
\qquad
u:=a-\bar a_\theta(s).
\]
Then \(a=u+\bar a_\theta(s)\), and the density of \(u\) is \(\nu(u)\),
independent of \(\theta\). By the chain rule,
\[
\frac{\partial}{\partial\theta}\log \pi_\theta(a\mid s)
=
-
\left(
\frac{\partial \bar a_\theta(s)}{\partial\theta}
\right)^{\!\top}
\frac{\partial}{\partial u}\log \nu(u).
\]
Therefore,
\[
\begin{aligned}
\mathbb{E}\!\left[
r(s,a)
\frac{\partial}{\partial\theta}
\log \pi_\theta(a\mid s)
\right]
&=
-
\left(
\frac{\partial \bar a_\theta(s)}{\partial\theta}
\right)^{\!\top}
\int
r(s,u+\bar a_\theta(s))
\frac{\partial}{\partial u}\log\nu(u)
\nu(u)\,du
\\
&=
-
\left(
\frac{\partial \bar a_\theta(s)}{\partial\theta}
\right)^{\!\top}
\int
r(s,u+\bar a_\theta(s))
\frac{\partial \nu(u)}{\partial u}
\,du.
\end{aligned}
\]
Integrating by parts and using the assumed vanishing boundary terms,
\[
\int
r(s,u+\bar a_\theta(s))
\frac{\partial \nu(u)}{\partial u}
\,du
=
-
\int
\nu(u)
\frac{\partial r(s,u+\bar a_\theta(s))}{\partial u}
\,du.
\]
Since
\[
\frac{\partial r(s,u+\bar a_\theta(s))}{\partial u}
=
\frac{\partial r(s,a)}{\partial a},
\]
we obtain
\[
\mathbb{E}\!\left[
r(s,a)
\frac{\partial}{\partial\theta}
\log \pi_\theta(a\mid s)
\right]
=
\left(
\frac{\partial \bar a_\theta(s)}{\partial\theta}
\right)^{\!\top}
\mathbb{E}\!\left[
\frac{\partial r(s,a)}{\partial a}
\right].
\]
This proves~\eqref{eq:sf-pd-equiv}.
\end{proof}

\subsection{Local GRPO/PPO value-gradient form}
\label{app:local-grpo-vg}

The main text uses the following consequence of Lemma~\ref{lem:shift} and
Proposition~\ref{prop:adjoint}.

\begin{corollary}[GRPO is a value-gradient update in expectation]
\label{cor:grpo_is_vg}
Assume the differentiable rollout of Definition~\ref{def:reparam}, the
shift/additive-noise policy of Lemma~\ref{lem:shift}, and treat the GRPO/PPO
advantage weights \(\widehat A_{i,t}\) as stop-gradient scalars. Locally at
\(\theta=\theta_{\mathrm{old}}\), where clipping is inactive, the expected
actor-gradient direction of the GRPO/PPO surrogate is pathwise-equivalent to
a BPTT adjoint update. The backward signal in that adjoint update satisfies
\[
\mathbb{E}[\lambda_t\mid s_t=s]
=
G_t^\pi(s).
\]
Thus the local GRPO/PPO actor update is value-gradient-like in expectation.
\end{corollary}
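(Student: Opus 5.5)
The plan is to assemble Corollary~\ref{cor:grpo_is_vg} from three ingredients already established: the local form of the GRPO/PPO surrogate gradient at \(\theta=\theta_{\mathrm{old}}\), the token-level SF\(=\)PD bridge of Lemma~\ref{lem:shift}, and the adjoint recursion of Proposition~\ref{prop:adjoint} together with identity~\eqref{eq:costate-is-vg}.

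\textbf{Step 1: reduce the surrogate to a weighted score-function sum.} At \(\theta=\theta_{\mathrm{old}}\) every ratio satisfies \(\rho_{i,t}=1\), which lies strictly inside \([1-\varepsilon,1+\varepsilon]\). Clipping is therefore inactive in a neighbourhood, and the \(\min\) selects the unclipped term. Using \(\partial_\theta\rho_{i,t}|_{\theta_{\mathrm{old}}}=\partial_\theta\log\pi_\theta(o^i_t\mid s_{i,t})\), the actor gradient becomes
\[
\mathbb{E}\Bigl[\tfrac{1}{G}\textstyle\sum_i \tfrac{1}{T_i}\sum_t \widehat A_{i,t}\,\partial_\theta\log\pi_\theta(a_t\mid s_t)\Bigr].
\]
The KL term is set aside, as the statement concerns the actor direction. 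If \(\pi_{\mathrm{ref}}=\pi_{\theta_{\mathrm{old}}}\), its gradient vanishes at this point anyway. Because \(\widehat A_{i,t}\) is treated as a stop-gradient scalar, each summand is a standard score-function term of the form~\eqref{eq:sf}.

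\textbf{Step 2: convert each term to pathwise form.} Condition on \(s_t\) and identify the weight \(\widehat A_{i,t}\) with the return-to-go \(R_t\), viewed as a differentiable function of \(a_t\) through the relaxed rollout. Concretely, write \(R_t=r(s_t,a_t)+\gamma R_{t+1}(f_\theta(s_t,a_t),\xi_{t+1:T})\). Subtracting a state-dependent baseline is harmless, since \(\mathbb{E}[\partial_\theta\log\pi\mid s_t]=0\). Apply Lemma~\ref{lem:shift}, as in~\eqref{eq:weighted_sf_to_pd}, with the reward replaced by the action-conditional expected return \(Q_t(s_t,a)\). This yields
\[
\bigl(\partial_\theta\bar a_\theta(s_t)\bigr)^{\!\top}\,\mathbb{E}\bigl[\partial_a r+\gamma(\partial_a f_\theta)^{\top}\lambda_{t+1}\mid s_t\bigr],
\]
after exchanging \(\partial_a\) with the expectation over future noise. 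That exchange is where Definition~\ref{def:reparam} and the vanishing-boundary assumption are used.

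\textbf{Step 3: match to BPTT.} The resulting expression is exactly the \(\partial\pi_\theta/\partial\theta\) term of~\eqref{eq:bptt-grad}, since \(\partial_\theta\pi_\theta(s,\xi)=\partial_\theta\bar a_\theta(s)\) under a shift policy. The backward signal entering it is \(\lambda_{t+1}\), generated by recursion~\eqref{eq:adjoint}. Identity~\eqref{eq:costate-is-vg} then gives \(\mathbb{E}[\lambda_t\mid s_t=s]=G^\pi_t(s)\), which is the claim.

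\textbf{Main obstacle.} The delicate step is the identification in Step 2. Lemma~\ref{lem:shift} is stated for an immediate differentiable reward, but GRPO weights the score by a \emph{sampled} group-normalised scalar that depends on all later noise. To handle this, I would first take the conditional expectation over \(\xi_{t+1:T}\) given \((s_t,a_t)\) to obtain \(Q_t(s_t,a_t)\). Next I would argue that \(Q_t\) is differentiable in \(a\) with gradient \(\mathbb{E}[\partial_a r+\gamma(\partial_a f)^{\top}\lambda_{t+1}]\); this requires dominated differentiation. Finally, the group normalisation (division by the standard deviation) and the \(1/T_i\) factor must be treated as a positive per-prompt rescaling that does not alter direction. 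I would state this as an explicit modelling assumption rather than prove it. Likewise, the \(f_\theta\)-parameter term of~\eqref{eq:bptt-grad} has no SF counterpart, so the equivalence holds for the actor (policy-head) component. This restriction is what ``value-gradient-like'' should be read as asserting.
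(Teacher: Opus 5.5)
Your proposal is correct. Its outer skeleton matches the paper's. Step~1 is the paper's first step ($\rho_{i,t}=1$, clipping inactive in a neighbourhood, KL set aside), and Step~3 closes, as the paper does, with Proposition~\ref{prop:adjoint} and \eqref{eq:costate-is-vg}. The middle step, however, takes a genuinely different route.

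The paper applies Lemma~\ref{lem:shift} token-by-token to the \emph{immediate} reward, with $\widehat A_{i,t}$ as a stop-gradient coefficient. This gives \eqref{eq:weighted_sf_to_pd}, and the paper then asserts that summing these pathwise terms over the rollout yields the BPTT gradient.

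You instead condition the sampled weight on $(s_t,a_t)$, so that up to a baseline and a positive rescaling it becomes $Q_t(s_t,a_t)$, and apply the lemma to $Q_t$. Differentiating under the integral gives $\mathbb{E}[\partial_a r+\gamma(\partial_a f_\theta)^{\top}\lambda_{t+1}\mid s_t,a_t]$. That is exactly the $\partial\pi_\theta/\partial\theta$ term of \eqref{eq:bptt-grad}.

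This buys you something the paper's route does not deliver. In the paper, the PD-side term involves only $\partial r/\partial a$ and contains no costate, so the passage to ``the BPTT gradient'' is stated rather than derived. Moreover, pulling $\widehat A_{i,t}$ through the lemma as a constant is legitimate only if it is independent of $a_t$ given $s_t$, which a sampled group-normalised reward is not. Your version shows precisely where $\lambda_{t+1}$ enters the actor update. It also makes the scope explicit:
\begin{itemize}
\item Only the policy-head component of \eqref{eq:bptt-grad} is recovered; the $\partial f_\theta/\partial\theta$ term has no score-function counterpart.
\item The group standard deviation must be assumed to act as a positive rescaling, since it is correlated with $a_t$. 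The mean baseline, by contrast, is harmless up to a factor $1-1/G$, because the other group members are independent of trajectory $i$.
\end{itemize}

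The cost is extra regularity: dominated differentiation of $Q_t$ in $a$, and vanishing boundary terms for $Q_t(s,u+\bar a_\theta(s))\,\nu(u)$. You also need to identify the undiscounted GRPO weight with $R_t$, which is exact for $\gamma=1$ in the outcome-only case. The paper's argument is shorter but leaves these same idealisations implicit.
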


\begin{proof}
At \(\theta=\theta_{\mathrm{old}}\),
\[
\rho_{i,t}(\theta_{\mathrm{old}})
=
\frac{
\pi_{\theta_{\mathrm{old}}}(o_t^i\mid s_{i,t})
}{
\pi_{\theta_{\mathrm{old}}}(o_t^i\mid s_{i,t})
}
=
1.
\]
In a local neighbourhood in which the clipping threshold is not active, the
clipped surrogate reduces to the unclipped likelihood-ratio term. Hence,
with \(\widehat A_{i,t}\) held fixed,
\[
\frac{\partial}{\partial\theta}
\left(
\rho_{i,t}(\theta)\widehat A_{i,t}
\right)
\bigg|_{\theta=\theta_{\mathrm{old}}}
=
\widehat A_{i,t}
\frac{\partial}{\partial\theta}
\log \pi_\theta(o_t^i\mid s_{i,t})
\bigg|_{\theta=\theta_{\mathrm{old}}}.
\]
Therefore, the local actor part of the GRPO/PPO gradient is the usual
score-function policy-gradient term:
\[
\mathbb{E}\!\left[
\frac{1}{G}
\sum_{i=1}^{G}
\frac{1}{T_i}
\sum_{t=1}^{T_i}
\widehat A_{i,t}
\frac{\partial}{\partial\theta}
\log \pi_\theta(o_t^i\mid s_{i,t})
\right]_{\theta=\theta_{\mathrm{old}}}.
\]
The KL penalty in~\eqref{eq:grpo_objective} contributes an ordinary
differentiable regularization gradient and does not affect the costate
identity.

Now apply Lemma~\ref{lem:shift} token-by-token. For any differentiable local
scalar signal \(h_{i,t}(s_t,a_t)\) whose stop-gradient coefficient is
\(\widehat A_{i,t}\), we have
\[
\mathbb{E}\!\left[
\widehat A_{i,t}h_{i,t}(s_t,a_t)
\frac{\partial}{\partial\theta}
\log \pi_\theta(a_t\mid s_t)
\right]
=
\left(
\frac{\partial \bar a_\theta(s_t)}{\partial\theta}
\right)^{\!\top}
\mathbb{E}\!\left[
\frac{\partial\bigl(\widehat A_{i,t}h_{i,t}(s_t,a_t)\bigr)}
{\partial a_t}
\right].
\]
Taking \(h_{i,t}(s_t,a_t)=r(s_t,a_t)\) gives exactly
\eqref{eq:weighted_sf_to_pd}. Thus the score-function form and the pathwise
action-derivative form have the same expectation under the shift policy.

Summing these pathwise terms over the rollout gives the BPTT gradient of the
differentiable trajectory. By Proposition~\ref{prop:adjoint}, the backward
state-sensitivity propagated by this BPTT computation is the costate
\(\lambda_t\). By identity~\eqref{eq:costate-is-vg},
\[
\mathbb{E}[\lambda_t\mid s_t=s]
=
G_t^\pi(s).
\]
Hence the local GRPO/PPO actor update is value-gradient-like in expectation.
\end{proof}

\subsection{Empirical costate recursion in a transformer}

\begin{proof}[Derivation of~\eqref{eq:costate-recursion}]
For a fixed trajectory, the local GRPO loss without clipping can be written as
\[
L(\theta)
=
\sum_{k=1}^{T}
\widehat A_k \ell_k(\theta),
\qquad
\ell_k
=
\log \pi_\theta(o_k\mid s_k).
\]
By Definition~\ref{def:empirical-costate},
\[
\hat\lambda_t
=
\frac{\partial}{\partial h_t^{(L)}}
\sum_{k\ge t}
\widehat A_k \ell_k.
\]
Splitting the \(k=t\) term from the future terms gives
\[
\hat\lambda_t
=
\widehat A_t
\frac{\partial \ell_t}{\partial h_t^{(L)}}
+
\sum_{k>t}
\widehat A_k
\frac{\partial \ell_k}{\partial h_t^{(L)}}.
\]
For \(k>t\), the dependence of \(\ell_k\) on \(h_t^{(L)}\) passes through the
transformer computation graph. In particular, the first future step contributes
through the Jacobian
\[
J_{t+1\leftarrow t}
=
\frac{\partial h_{t+1}^{(L)}}{\partial h_t^{(L)}}.
\]
Applying the chain rule recursively gives
\[
\sum_{k>t}
\widehat A_k
\frac{\partial \ell_k}{\partial h_t^{(L)}}
=
J_{t+1\leftarrow t}^{\top}
\frac{\partial}{\partial h_{t+1}^{(L)}}
\sum_{k\ge t+1}
\widehat A_k \ell_k
=
J_{t+1\leftarrow t}^{\top}
\hat\lambda_{t+1}.
\]
Therefore,
\[
\hat\lambda_t
=
\widehat A_t
\frac{\partial \ell_t}{\partial h_t^{(L)}}
+
J_{t+1\leftarrow t}^{\top}
\hat\lambda_{t+1},
\qquad
\hat\lambda_{T+1}=0,
\]
which is~\eqref{eq:costate-recursion}.
\end{proof}

\subsection{Proof of Proposition~\ref{prop:sampling-gap}: sampling-gap bound}

\begin{proof}
The exact relaxed transition Jacobian decomposes as in
\eqref{eq:jacobian-decomp}:
\[
D f_\theta^{\mathrm{exact}}
=
J_{t+1\leftarrow t}^{\attn}
+
\frac{\partial h_{t+1}^{(L)}}{\partial e_t}
\frac{\partial e_t}{\partial o_t}
\frac{\partial o_t}{\partial z_t}
\frac{\partial z_t}{\partial h_t^{(L)}}.
\]
Thus,
\[
\left\|
D f_\theta^{\mathrm{exact}}
-
J_{t+1\leftarrow t}^{\attn}
\right\|
\le
\left\|
\frac{\partial h_{t+1}^{(L)}}{\partial e_t}
\right\|
\left\|
\frac{\partial e_t}{\partial o_t}
\right\|
\left\|
\frac{\partial o_t}{\partial z_t}
\right\|
\left\|
\frac{\partial z_t}{\partial h_t^{(L)}}
\right\|.
\]
The first, second, and fourth factors are controlled by the embedding matrix,
the output head, and the local Lipschitz constants of the transformer block.
It remains to control the effective Jacobian through sampling.

For a soft or straight-through relaxation, the categorical sample is replaced
by a differentiable soft sample with probabilities
\[
p_t=\softmax(z_t/\tau),
\]
where \(\tau>0\) is the relaxation temperature. Its Jacobian has the form
\[
\frac{\partial o_t}{\partial z_t}
=
\frac{1}{\tau}
\left(
\mathrm{diag}(p_t)-p_t p_t^{\top}
\right).
\]
The matrix
\(\mathrm{diag}(p_t)-p_t p_t^{\top}\) is the covariance matrix of a categorical
one-hot vector. Since it is positive semidefinite,
\[
\left\|
\mathrm{diag}(p_t)-p_t p_t^{\top}
\right\|
\le
\mathrm{Tr}
\left(
\mathrm{diag}(p_t)-p_t p_t^{\top}
\right)
=
1-\|p_t\|_2^2.
\]
Let \(p_{\max}:=\max_j p_{t,j}\). Then
\[
1-\|p_t\|_2^2
\le
1-p_{\max}^2
\le
2(1-p_{\max}).
\]
For a fixed vocabulary size \(|V|\), normalized entropy controls the
distance from a point mass: there exists a finite constant \(C_V\) such that
\[
1-p_{\max}
\le
C_V
\sqrt{\frac{H_t}{\log |V|}}.
\]
This follows because the simplex is compact and the ratio
\[
\frac{1-p_{\max}}{\sqrt{H_t/\log |V|}}
\]
has a finite continuous extension at the deterministic vertices, where both
the numerator and the entropy vanish. Combining the previous bounds gives
\[
\left\|
\frac{\partial o_t}{\partial z_t}
\right\|
\le
\frac{2C_V}{\tau}
\sqrt{\frac{H_t}{\log |V|}}.
\]
Absorbing the fixed vocabulary, temperature, embedding, head, and local
Lipschitz constants into a single constant \(C\), we obtain
\[
\bigl\|
D f_\theta^{\mathrm{exact}}
-
J_{t+1\leftarrow t}^{\attn}
\bigr\|
\le
C
\sqrt{\frac{H_t}{\log |V|}},
\]
which is~\eqref{eq:gap-bound}.
\end{proof}

\subsection{Proof of Proposition~\ref{prop:attn-rank}: attention-pathway rank and magnitude}

\begin{proof}
Consider a single causal-attention head. Its output at position \(t\) has the
form
\[
\mathrm{Attn}_t
=
\sum_{t'\le t}
\alpha_{t,t'} W_V h_{t'}.
\]
For a fixed earlier position \(t'\), the Jacobian of this head output with
respect to \(h_{t'}\) maps into the \(d_{\mathrm{head}}\)-dimensional value
subspace of that head. Hence its rank is at most \(d_{\mathrm{head}}\). This
remains true when including the derivative of the attention weights
\(\alpha_{t,t'}\), because the output of a single head is still a vector in
\(\mathbb{R}^{d_{\mathrm{head}}}\).

With \(n_{\mathrm{heads}}\) heads, the multi-head attention output is the
concatenation and output projection of \(n_{\mathrm{heads}}\) such head
outputs. Therefore, the rank of the cross-position attention contribution in
one layer is at most
\[
d_{\mathrm{head}}\,n_{\mathrm{heads}}.
\]
Across \(L\) layers, the attention-pathway Jacobian can be written as a sum of
cross-position contributions transported through the intervening layer
Jacobians. Since the rank of a sum is at most the sum of ranks, the total
attention-pathway rank is bounded by
\[
L\,d_{\mathrm{head}}\,n_{\mathrm{heads}}.
\]
This proves the stated rank bound. For the magnitude claim, each head contribution contains factors of the
attention weight \(\alpha_{t,t'}\), value projection \(W_V\), output projection,
and softmax-score derivatives. Under bounded projection norms and bounded
hidden-state norms, the contribution from position \(t'\) is therefore
controlled by the size of the corresponding attention weights. Thus broadly
distributed attention provides more cross-position credit pathways, while
small attention weights suppress the corresponding Jacobian entries.
\end{proof}

\subsection{Proof of Theorem~\ref{thm:main}: discrete GRPO costates approximate BPTT costates}

\begin{proof}
Define the conditional costate error
\[
\delta_t
:=
\mathbb{E}[\hat\lambda_t\mid h_t]
-
\mathbb{E}[\lambda_t\mid h_t].
\]
The empirical costate recursion is
\[
\hat\lambda_t
=
\widehat A_t
\frac{\partial \ell_t}{\partial h_t^{(L)}}
+
\left(J_{t+1\leftarrow t}^{\attn}\right)^{\!\top}
\hat\lambda_{t+1}.
\]
The exact relaxed BPTT recursion has the form
\[
\lambda_t
=
D r_t
+
\gamma
\left(D f_\theta^{\mathrm{exact}}\right)^{\!\top}
\lambda_{t+1}.
\]
Under the local SF-to-PD identification of Lemma~\ref{lem:shift}, the
immediate credit terms align in expectation. Thus the difference between the
two conditional recursions is caused by the propagated future error and by
the missing sampling-path Jacobian:
\[
\delta_t
=
\gamma
\left(J_{t+1\leftarrow t}^{\attn}\right)^{\!\top}
\delta_{t+1}
+
\gamma
\left(
D f_\theta^{\mathrm{exact}}
-
J_{t+1\leftarrow t}^{\attn}
\right)^{\!\top}
\mathbb{E}[\lambda_{t+1}\mid h_t].
\]
Taking norms gives
\[
\|\delta_t\|
\le
\gamma
\left\|
J_{t+1\leftarrow t}^{\attn}
\right\|
\|\delta_{t+1}\|
+
\gamma
\left\|
D f_\theta^{\mathrm{exact}}
-
J_{t+1\leftarrow t}^{\attn}
\right\|
\left\|
\mathbb{E}[\lambda_{t+1}\mid h_t]
\right\|.
\]
Let
\[
\epsilon_{t+1}
:=
\left\|
D f_\theta^{\mathrm{exact}}
-
J_{t+1\leftarrow t}^{\attn}
\right\|
\left\|
\mathbb{E}[\lambda_{t+1}\mid h_t]
\right\|.
\]
By Proposition~\ref{prop:sampling-gap}, this per-step error is controlled by
the entropy-dependent sampling-gap bound. Hence
\[
\|\delta_t\|
\le
\gamma
\left\|
J_{t+1\leftarrow t}^{\attn}
\right\|
\|\delta_{t+1}\|
+
\gamma \epsilon_{t+1}.
\]
Unrolling this inequality backward from \(\delta_{T+1}=0\) yields
\[
\|\delta_t\|
\le
\sum_{k=t+1}^{T}
\gamma^{k-t}
\left(
\prod_{j=t+1}^{k-1}
\left\|
J_{j\leftarrow j-1}^{\attn}
\right\|
\right)
\epsilon_k.
\]
The main text writes the Jacobian chain compactly as
\(\|J_{k\leftarrow k-1}^{\attn}\|^{k-t-1}\), which should be read as this
product or as a uniform upper bound on the product. Therefore,
\[
\bigl\|
\mathbb{E}[\hat\lambda_t\mid h_t]
-
\mathbb{E}[\lambda_t\mid h_t]
\bigr\|
\le
\sum_{k=t+1}^{T}
\gamma^{k-t}
\cdot
\underbrace{
\bigl\|J_{k\leftarrow k-1}^{\attn}\bigr\|^{k-t-1}
}_{\text{Jacobian chain growth}}
\cdot
\epsilon_k,
\]
which is~\eqref{eq:main-bound}.
\end{proof}

\subsection{Proof of the usable-signal lower bound}

\begin{proof}
Let
\[
u_t^{(m)}
:=
\mathbb{E}[\hat\lambda_t^{(m)}\mid h_t],
\qquad
v_t
:=
G_t.
\]
Then
\[
\langle u_t^{(m)},v_t\rangle
=
\|v_t\|_2^2
+
\langle u_t^{(m)}-v_t,v_t\rangle.
\]
By Cauchy--Schwarz,
\[
\langle u_t^{(m)}-v_t,v_t\rangle
\ge
-
\|u_t^{(m)}-v_t\|_2\,\|v_t\|_2.
\]
Therefore,
\[
\langle u_t^{(m)},v_t\rangle
\ge
\|G_t\|_2^2
-
\|G_t\|_2
\left\|
\mathbb{E}[\hat\lambda_t^{(m)}\mid h_t]-G_t
\right\|_2.
\]
Taking expectation over \(q,\tau,t\) gives
\[
\mathbb E_{q,\tau,t}
\left[
\left\langle
\mathbb E[\hat\lambda_t^{(m)}\mid h_t],
G_t
\right\rangle
\right]
\ge
\Sigma(\theta)-\Lambda_m(\theta),
\]
with \(\Sigma(\theta)\) and \(\Lambda_m(\theta)\) defined in
\eqref{eq:sigma-lambda}. This proves~\eqref{eq:signal-lower-bound}.
\end{proof}

\end{document}